\patchcmd{\maketitle}{\@copyrightspace}{}{}{}
\newtheorem{theorem}{Theorem}[section]
\newtheorem{lemma}[theorem]{Lemma}
\theoremstyle{definition}
\newtheorem{definition}{Definition}[section]
\theoremstyle{remark}
\newtheoremstyle{problemstyle}  
        {3pt}                                               
        {3pt}                                               
        {\normalfont\itshape}                               
        {}                                                  
        {\bfseries}                 
        {\normalfont\bfseries:}         
        {.5em}                                          
        {}                                                  
\theoremstyle{problemstyle}
\newcommand*{\transpose}{%
  {\mathpalette\@transpose{}}%
}
\newcommand*{\@transpose}[2]{%
  \raisebox{\depth}{$\m@th#1\intercal$}%
}
\begin{document}
%
\title{Technical Report: Graph-Structured Sparse Optimization for Connected Subgraph Detection}


\author{\IEEEauthorblockN{Baojian Zhou}
\IEEEauthorblockA{Computer Science Department\\
University at Albany -- SUNY \\
Albany, USA\\
bzhou6@albany.edu}
\and
\IEEEauthorblockN{Feng Chen}
\IEEEauthorblockA{Computer Science Department\\
University at Albany -- SUNY \\
Albany, USA \\
fchen5@albany.edu}
}
\vspace{-5mm}
\maketitle
\begin{abstract}
Structured sparse optimization is an important and challenging problem for analyzing high-dimensional data in a variety of applications such as bioinformatics, medical
imaging, social networks, and astronomy. Although a number of structured sparsity models have been explored, such as trees, groups, clusters, and paths, connected subgraphs have been rarely explored in the current literature. One of the main technical challenges is that there is no \textit{structured sparsity-inducing norm} that can directly model the space of connected subgraphs, and there is no exact implementation of a \textit{projection oracle} for connected subgraphs due to its NP-hardness. In this paper, we explore efficient approximate projection oracles for connected subgraphs, and propose two new efficient algorithms, namely, \textsc{Graph-IHT} and \textsc{Graph-GHTP}, to optimize a generic nonlinear objective function subject to connectivity constraint on the support of the variables. Our proposed algorithms enjoy strong guarantees analogous to several current methods for sparsity-constrained optimization, such as Projected Gradient Descent (\textsc{PGD}), Approximate Model Iterative Hard Thresholding (\textsc{AM-IHT}), and  Gradient Hard Thresholding Pursuit (\textsc{GHTP}) with respect to convergence rate and approximation accuracy. We apply our proposed algorithms to optimize several well-known graph scan statistics in several applications of connected subgraph detection as a case study, and the experimental results demonstrate that our proposed algorithms outperform state-of-the-art methods. 
\end{abstract}


%
\IEEEpeerreviewmaketitle

\section{Introduction}
In recent years, structured sparse methods have attracted much attention in many domains such as bioinformatics, medical imaging, social networks, and astronomy~\cite{bach2012structured, hegde2014fast, jacob2009group, huang2011learning, asterisstay2015icml}. Structured sparse methods have been shown effective to identify latent patterns in high-dimensional data via the integration of prior knowledge about the structure of the patterns of interest, and at the same time remain a mathematically tractable concept. A number of structured sparsity models have been well explored, such as the sparsity models defined through trees~\cite{hegde2014fast}, groups~\cite{jacob2009group}, clusters~\cite{huang2011learning}, and paths~\cite{asterisstay2015icml}. The generic optimization problem based on a structured sparsity model has the form
\setlength{\belowdisplayskip}{4pt} \setlength{\belowdisplayshortskip}{4pt}
\setlength{\abovedisplayskip}{4pt} \setlength{\abovedisplayshortskip}{4pt}
\begin{eqnarray}
\min_{ {\bf x} \in \mathbb{R}^n} f({\bf x})\ \ s.t. \ \ \text{supp}({\bf x}) \in \mathbb{M} \label{problem:general}
\end{eqnarray}
where $f: \mathbb{R}^n \rightarrow \mathbb{R}$ is a differentiable cost function,  the sparsity model $\mathbb{M}$ is defined as a family of structured supports: $\mathbb{M} = \{S_1, S_2, \cdots, S_L\}$, where $S_i \subseteq [n]$ satisfies a certain structure property (e.g., trees, groups, clusters), $[n] = \{1, 2, \cdots, n\}$, and the support set $\text{supp}({\bf x})$ refers to the set of indexes of non-zero entries in ${\bf x}$. For example, the popular $k$-sparsity model is defined as $\mathbb{M} = \{S\subseteq [n]\ |\ |S| \le k\}$. 

Existing structured sparse methods fall into two main categories: 1) \textbf{Sparsity-inducing norms based.} The methods in this category explore structured sparsity models (e.g., trees, groups, clusters, and paths) \cite{bach2012structured} that can be encoded as structured sparsity-inducing norms, and reformulate Problem~(\ref{problem:general}) as a convex (or non-convex) optimization problem
\begin{eqnarray}
\min\nolimits_{{\bf x} \in \mathbb{R}^n} f({\bf x}) + \lambda\cdot  \Omega({\bf x}) \label{problem:convex}
\end{eqnarray}
where $\Omega({\bf x})$ is a structured sparsity-inducing norm of $\mathbb{M}$ that is typically non-smooth and non-Euclidean and $\lambda$ is a trade-off parameter. 2) \textbf{Model-projection based.} The methods in this category rely on a projection oracle of $\mathbb{M}$: \begin{eqnarray}\text{P}({\bf b}) = \arg \min\nolimits_{{\bf x} \in \mathbb{R}^n} \|{\bf b} - {\bf x}\|_2^2\ \ s.t.\ \ \text{supp}({\bf x}) \in \mathbb{M},\end{eqnarray} 
and decompose the problem into two sub-problems, including unconstrained minimization of $f({\bf x})$ and the projection problem $\text{P}({\bf b})$. Most of the methods in this category assume that the projection problem $\text{P}({\bf b})$ can be solved \textbf{exactly}, including the forward-backward algorithm~\cite{zhang2009adaptive}, the gradient descent algorithm~\cite{tewari2011greedy},  the gradient hard-thresholding algorithms~\cite{yuan2013gradient,bahmani2013greedy,jain2014iterative}, the projected iterative hard thresholding~\cite{blumensath2013compressed, bahmani2016learning}, and the Newton greedy pursuit algorithm~\cite{yuan2014newton}. However, when an exact solver of $\text{P}({\bf b})$ is unavailable and we have to apply approximate projections, the theoretical guarantees of these methods do not hold any more. We note that there is one recent approach named as \textsc{Graph}-\textsc{Cosamp} that admits inexact projections by assuming ``head'' and ``tail''  oracles for the projections, but is only applicable to compressive sensing or linear regression problems~\cite{hegde2015nearly}.

We consider an underlying graph $\mathbb{G} = (\mathbb{V}, \mathbb{E})$ defined on the coefficients of the unknown vector ${\bf x}$, where $\mathbb{V} = [n]$ and $\mathbb{E} \subseteq \mathbb{V}\times \mathbb{V}$. We focus on the sparsity model of connected subgraphs that is defined as 
\begin{eqnarray}
\mathbb{M}(\mathbb{G}, k) = \{S \subseteq \mathbb{V}\ |\ |S| \le k, S \text{ is connected}\}, \label{sparsity-model}
\end{eqnarray} 
where $k$ refers to the allowed maximum subgraph size. There are a wide array of applications that involve the search of interesting or anomalous connected subgraphs in networks. The connectivity constraint ensures that subgraphs reflect changes due to localized in-network processes. We describe a few applications below. 

\begin{itemize}
\item \textit{Detection in sensor networks,} e.g., detection of traffic bottlenecks in road networks or airway networks~\cite{anbarouglu2015non}; crime hot spots in geographic networks~\cite{modarres2007hotspot}; and pollutions in water distribution networks~\cite{de2010detection}.
\item \textit{Detection in digital signals and images,} e.g., detection of objects in images~\cite{hegde2015nearly}. 
\item \textit{Disease outbreak detection,} e.g., early detection of disease outbreaks from information networks incorporating data from hospital emergency visits, ambulance dispatch calls and pharmacy sales of over-the-counter drugs~\cite{Speakman-14}.
\item \textit{Virus detection in a computer network,} e.g., detection of viruses or worms spreading from host to host in a computer network~\cite{neil2013scan}. 
\item \textit{Detection in genome-scale interaction network,} e.g., detection of significantly mutated subnetworks~\cite{mairal2013supervised}.
\item \textit{Detection in social media networks,} e.g., detection and forecasting of societal events~\cite{DBLP:conf/kdd/ChenN14, chen2015human}. 
\end{itemize}

To the best of our knowledge, there is no existing approach to Problem~(\ref{problem:general}) for $\mathbb{M}(\mathbb{G}, k)$ that is computationally tractable and provides  performance bound. \textbf{First}, there is no known structured sparsity-inducing norm for $\mathbb{M}(\mathbb{G}, k)$. The most relevant norm is fused lasso norm~\cite{xin2014efficient}: $\Omega({\bf x}) = \sum\nolimits_{(i, j) \in \mathbb{E}} |x_i - x_j|$, where $x_i$ is the $i$-th entry in ${\bf x}$. This norm is able to enforce the smoothness between neighboring entries in ${\bf x}$, but has limited capability to recover all the possible connected subsets as described by $\mathbb{M}(\mathbb{G}, k)$ (See further discussions in Section~\ref{sect:experiments} Experiments). \textbf{Second}, there is no exact solver for the projection oracle of $\mathbb{M}(\mathbb{G}, k)$:\begin{eqnarray}
\text{P}({\bf x}) = \arg \min_{{\bf x} \in \mathbb{R}^n}\|{\bf b} - {\bf x}\|_2^2\ \ s.t.\ \ \text{supp}({\bf x}) \in \mathbb{M}(\mathbb{G}, k),
\label{eqn:projection}
\end{eqnarray} 
as this projection problem is NP-hard due to a reduction from classical Steiner tree problem~\cite{johnson2000prize}. As most existing model-projection based methods require an exact solution to the projection oracle $\text{P}({\bf x})$, these methods are inapplicable to the problem studied here. To the best of our knowledge, there is only one recent approach named as \textsc{Graph}-\textsc{Cosamp} that admits inexact projections for $\mathbb{M}(\mathbb{G}, k)$ by assuming ``head'' and ``tail''  oracles for the projections, but is only applicable to compression sensing and linear regression problems~\cite{hegde2015nearly}. The main contributions of our study are summarized as follows:
\begin{itemize}
\item \textbf{Design of efficient approximation algorithms.} Two new algorithms, namely, \textsc{Graph-IHT} and \textsc{Graph-GHTP}, are developed to approximately solve Problem~(\ref{problem:general}) that has a differentiable cost function and a sparsity model of connected subgraphs $\mathbb{M}(\mathbb{G}, k)$. 
\textsc{Graph-GHTP} is required to minimize $f(\bf x)$ over a projected subspace as an intermediate step, which could be too costly in some applications and \textsc{Graph-IHT} could be considered as a fast variant of \textsc{Graph-GHTP}. 
\item \textbf{Theoretical guarantees and connections.} The convergence rate and accuracy of our proposed algorithms are analyzed under a smoothness condition of $f(\bf x)$ that is more general than popular conditions such as Restricted Strong
Convexity/Smoothness (\textsc{RSC}/\textsc{RSS}) and Stable Mode Restricted Hessian (\textsc{SMRH}). We prove that under mild conditions our proposed  \textsc{Graph-IHT} and \textsc{Graph-GHTP} enjoy rigorous theoretical guarantees. 
\item \textbf{Compressive experiments to validate the effectiveness and efficiency of the proposed techniques.} Both \textsc{Graph-IHT} and \textsc{Graph-GHTP} are applied to optimize a variety of graph scan statistic models for the connected subgraph detection task. Extensive experiments on a number of benchmark datasets demonstrate that \textsc{Graph-IHT} and \textsc{Graph-GHTP} perform superior to state-of-the-art methods that are designed specifically for this task in terms of subgraph quality and running time. 
\end{itemize}

\textbf{Reproducibility:} The implementation of our algorithms and the data sets is open-sourced via the link \cite{fullversion}.

The remaining parts of this paper are organized as follows. Sections~\ref{sect:problem-formulation} introduces the sparsity model of connected subgraphs and statement of the problem. Sections~\ref{sect:algorithms} presents two efficient algorithms and their theoretical analysis. Section ~\ref{sect:applications} discusses applications of our proposed algorithms to graph scan statistic models. Experiments on several real world benchmark datasets are presented in Section~\ref{sect:experiments}. Section~\ref{sect:relatedWork} discusses related work and Section~\ref{sect:conclusion} concludes the paper and describes future work.

\section{Problem Formulation}
\label{sect:problem-formulation}
Given an underlying graph $\mathbb{G} = (\mathbb{V}, \mathbb{E})$ defined on the coefficients of the unknown vector $\bf x$, where $\mathbb{V} = [n]$, $\mathbb{E} \subseteq \mathbb{V}\times \mathbb{V}$, and $n$  is typically large (e.g., $n > 10,000$). The sparsity model of connected subgraphs in $\mathbb{G}$ is defined in~(\ref{sparsity-model}), and its projection oracle $\text{P}({\bf x})$ is defined in~(\ref{eqn:projection}). As this projection oracle is NP-hard to solve, we first introduce efficient approximation algorithms for $\text{P}({\bf x})$ and then present statement of the problem that will be studied in the paper.

\subsection{Approximation algorithms for the projection oracle \texorpdfstring{$\text{P}({\bf x})$}{Px}}
\label{sect:modelOfCS}

There are two nearly-linear time approximation algorithms~\cite{hegde2015nearly} for $\text{P}({\bf x})$ that have the following properties: 
\begin{itemize}
\item \textbf{Tail approximation} ($\text{T}({\bf x})$):  Find a $S\subseteq \mathbb{V}$ such that 
\begin{eqnarray}
\|{\bf x} -{\bf x}_S\|_2 \le c_T \cdot \min_{S^\prime \in \mathbb{M}(\mathbb{G}, k_T) } \|{\bf x} - {\bf x}_{S^\prime}\|_2,
\end{eqnarray}
where $c_T = \sqrt{7}$, $k_T=5k$, and  ${\bf x}_{S}$ is the restriction of ${\bf x}$ to indices in $S$: we have $({\bf x}_{S})_i = x_i$ for $i \in S$ and $({\bf x}_{S})_i = 0$ otherwise. 
\item \textbf{Head approximation} ($\text{H}({\bf x})$): Find a $S \subseteq \mathbb{V}$ such that 
\begin{eqnarray}
\|{\bf x}_S\|_2 \ge c_H\cdot \max_{S^\prime \in \mathbb{M}(\mathbb{G}, k_H) } \|{\bf x}_{S^\prime}\|_2,
\end{eqnarray}
where $c_H = \sqrt{1/14}$ and $k_H = 2k$. 
\end{itemize}
It can be readily proved that, if $c_T = c_H = 1$, then $\text{T}({\bf x}) = \text{H}({\bf x}) = \text{P}({\bf x})$, which indicates that these two approximations ($\text{T}({\bf x})$ and $\text{H}({\bf x})$) stem from the fact that $c_T > 1$ and $c_H < 1$.

\begin{figure*}[t]
  \centering
  \includegraphics[width=1\textwidth,natwidth=610,natheight=642]{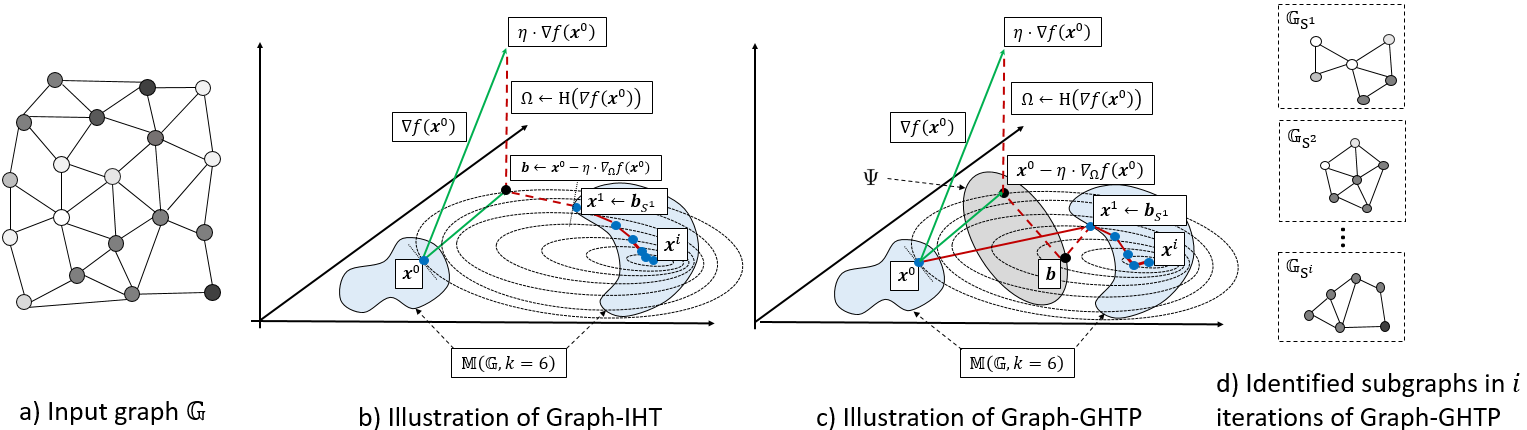}
  \caption{ Illustration of \textsc{Graph-IHT} and \textsc{Graph-GHTP} on the main steps of each iteration. In this example, the gray scale of each node $i$ encodes the weight of this node $w_i \in \mathbb{R}$, $f({\bf x}) = - {\bf w}^\mathsf{T}{\bf x} +  \frac{1}{2} \| {\bf x}\|^2$, and the maximum size of subgraphs is set to $k=6$, where ${\bf w} = [w_1, \cdots, w_n]^T$. The resulting problem tends to find a connected subgraph with the largest overall weight (See discussion about the effect of  $\frac{1}{2}\|x\|^2$ in Section~\ref{sect:applications}). In each iteration, $S^i$ is the connected subset of nodes and its induced subgraph is denoted as $\mathbb{G}_{S^i}$. The sequence of intermediate vectors and subgraphs are: $({\bf x}^0, \mathbb{G}_{S^1}), \cdots, ({\bf x}^i, \mathbb{G}_{S^i})$. } 
  \label{fig:model-example}
  \vspace{-3mm}
\end{figure*}
\subsection{Problem statement}
\label{sect:problemStatement}
Given a predefined cost function $f({\bf x})$ that is differentiable, the input graph $\mathbb{G}$, and the sparsity model of connected subgraph $\mathbb{M}(\mathbb{G}, k)$, the problem to be studied is formulated as: 
\begin{eqnarray}
\min_{{\bf x} \in \mathbb{R}^n} f({\bf x})\ \ s.t. \ \ \text{supp}({\bf x}) \in \mathbb{M}(\mathbb{G}, k)\label{prob:focusedproblem}. 
\end{eqnarray}
Problem~(\ref{prob:focusedproblem}) is difficult to solve as it involves decision variables from a nonconvex set that is composed of many disjoint subsets. In this paper, we will develop nearly-linear time algorithms to approximately solve Problem~(\ref{prob:focusedproblem}). The key idea is decompose this problem to sub-problems that are easier to solve. These sub-problems include an optimization sub-problem of $f({\bf x})$ that is independent on $\mathbb{M}(\mathbb{G}, k)$ and projection approximations for $\mathbb{M}(\mathbb{G}, k)$, including $\text{T}({\bf x})$ and $\text{H}({\bf x})$. We will design efficient algorithms to couple these sub-problems to obtain global solutions to Problem~(\ref{prob:focusedproblem}) with  good trade-off on running time and accuracy. 

\section{Algorithms}
\label{sect:algorithms}

This section first presents two efficient algorithms, namely, \textsc{Graph-IHT} and \textsc{Graph-GHTP}, and then analyzes their time complexities and performance bounds.  

\subsection{Algorithm \textsc{Graph-IHT}}
The proposed \textsc{Graph-IHT} algorithm generalizes the traditional algorithm named as projected gradient descent~\cite{blumensath2013compressed,bahmani2016learning} that requires a exact solver of the projection oracle $\text{P}({\bf x})$. 
The high-level summary of \textsc{Graph-IHT} is shown in Algorithm~\ref{Graph-IHT} and illustrated in Figure~\ref{fig:model-example} (b). The procedure generates a sequence of intermediate vectors ${\bf x}^{0}$, ${\bf x}^{1}$, $\cdots$ from an initial approximation ${\bf x}^{0}$. 
At the $i$-th iteration, the \textbf{first step} (Line 5) first calculates the gradient ``$\nabla f({\bf x}^{i})$'', and then identifies a subset of nodes via head approximation that returns a support set with the head value at least a constant factor of the optimal head value: ``$\Omega \leftarrow \text{H}(\nabla f({\bf x}^i))$''. The support set $\Omega$ can be interpreted as the subspace where the nonconvex set ``$\{ {\bf x}\ |\ \text{supp}({\bf x}) \in \mathbb{M}(\mathbb{G}, k)\}$'' is located, and the projected gradient in this subspace is: ``$\nabla_\Omega f({\bf x}^i)$''. The \textbf{second step} (Line 6) calculates the projected gradient descent at the point ${\bf x}^{i}$ with step-size $\eta$: ``${\bf b} \leftarrow {\bf x}^i - \eta \cdot \nabla_\Omega f({\bf x}^i)$''. The \textbf{third step} (Line 7) identifies a subset of nodes via tail approximation that returns a support set with tail value at most a constant times larger than the optimal tail value: ``$S^{i+1} \leftarrow \text{T}({\bf b})$''. The \textbf{last step} (Line 8) calculates the intermediate solution ${\bf x}^{i+1}$: ${\bf x}^{i+1} = {\bf b}_{S^{i+1}}$. The previous two steps can be interpreted as the projection of ${\bf b}$ to the nonconvex set ``$\{ {\bf x}\ |\ \text{supp}({\bf x}) \in \mathbb{M}(\mathbb{G}, k)\}$'' using the tail approximation. 

\begin{algorithm}[h]
\caption{\textsc{Graph-IHT}}
\label{Graph-IHT}
\begin{algorithmic}[1]
\State \textbf{Input}: Input graph $\mathbb{G}$,  maximum subgraph size $k$, and  step size $\eta$ (1 by default).
\State \textbf{Output}: The estimated vector $\hat{\bf x}$ and the corresponding connected subgraph $\mathbb{S}$. 

\State $i \leftarrow 0$, ${\bf x}^i \leftarrow {\bf 0}$; $S^{i} \leftarrow \emptyset$;
\Repeat
\State $\Omega \leftarrow \text{H}(\nabla f({\bf x}^i))$;
\State ${\bf b} \leftarrow {\bf x}^i - \eta \cdot \nabla_\Omega f({\bf x}^i)$;
\State $S^{i+1} \leftarrow \text{T}({\bf b})$;
\State ${\bf x}^{i+1} \leftarrow {\bf b}_{S^{i+1}}$;
\State $i \leftarrow i+1$;
\Until{halting condition holds}
\State {\bf return} $\hat{{\bf x}} = {\bf x}^{i}$ and $\mathbb{S} = \mathbb{G}_{S^{i}}$;
\end{algorithmic}
\end{algorithm}

\begin{algorithm}[h]
\caption{\textsc{Graph-GHTP}}
\label{Graph-GHTP}
\begin{algorithmic}[1]
\State \textbf{Input}: Input graph $\mathbb{G}$,  maximum subgraph size $k$, and  step size $\eta$ (1 by default).
\State \textbf{Output}: The estimated vector $\hat{\bf x}$ and the corresponding connected subgraph $\mathbb{S}$. 
\State $i \leftarrow 0$, ${\bf x}^i \leftarrow {\bf 0}$; $S^{i} \leftarrow \emptyset$;
\Repeat
\State $\Omega \leftarrow \text{H}(\nabla f({\bf x}^i))$
\State $\Psi \leftarrow \text{supp}({\bf x}^i - \eta \cdot \nabla_\Omega f({\bf x}^i))$;
\State ${\bf b} \leftarrow \arg\min_{{\bf x} \in \mathbb{R}^n} f({\bf x})\ \ s.t. \ \ \text{supp}({\bf x}) \subseteq \Psi$;
\State $S^{i+1} \leftarrow \text{T}({\bf b})$;
\State ${\bf x}^{i+1}  \leftarrow {\bf b}_{S^{i+1}}$;
\State $i \leftarrow i + 1$;
\Until{halting condition holds}
\State {\bf return} $\hat{{\bf x}} = {\bf x}^{i}$ and $\mathbb{S} = \mathbb{G}_{S^{i}}$;
\end{algorithmic}
\end{algorithm}

\subsection{Algorithm \textsc{Graph-GHTP}}
The proposed \textsc{Graph-GHTP} algorithm generalizes the traditional algorithm named as Gradient Hard Threshold Pursuit (\textsc{GHTP}) that is designed specifically for the k-sparsity model: $\mathbb{M} = \{S\subseteq [n]\ |\ |S| \le k\}$~\cite{yuan2013gradient}.  The high-level summary of \textsc{Graph-GHTP} is shown in Algorithm~\ref{Graph-GHTP} and  illustrated in Figure~\ref{fig:model-example} (c). The \textbf{first two steps} (Line 5 and Line 6) in each iteration is the same as the first two steps (Line 5 and Line 6) of \textsc{Graph-IHT}, except that we return the support of the projected gradient descent: ``$\Psi \leftarrow \text{supp}({\bf x}^i - \eta \cdot \nabla_\Omega f({\bf x}^i))$'',  in which pursuing the minimization will be most effective. Over the support set $S$, the function $f$ is minimized to produce an intermediate estimate at the \textbf{third} step (Line 7): ``${\bf b} \leftarrow \arg\min_{{\bf x} \in \mathbb{R}^n} f({\bf x})\ \ s.t. \ \ \text{supp}({\bf x}) \subseteq \Omega$''. 
The \textbf{fourth and fifth} steps (Line 8 and Line 9) are the same as the last two steps (Line 7 and Line 8) of \textsc{Graph-IHT} in each iteration. 

\subsection{Relations between \textsc{Graph-IHT} and \textsc{Graph-GHTP}}

These two algorithms are both variants of gradient descent. In overall, \textsc{Graph-GHTP} converges faster than \textsc{Graph-IHT} as it identifies a better intermediate solution in each iteration by minimizing $f(\bf x)$ over a projected subspace $\{{\bf x} \ | \ \text{supp}({\bf x}) \subseteq \Omega \}$. If the cost function $f({\bf x})$ is linear or has some special structure, this intermediate step can be conducted in nearly-linear time. However, when this step is too costly in some applications, \textsc{Graph-IHT} is preferred. 

\subsection{Theoretical Analysis of \textsc{Graph-IHT}}
In order to demonstrate the accuracy of estimates using
Algorithm 1, we require that the cost function $f({\bf x})$ satisfies
the Weak Restricted Strong Convexity (WRSC) condition as
follows:

\begin{definition}[Weak Restricted Strong Convexity  Property (WRSC)]
A function $f(\bf x)$ has the ($\xi$, $\delta$, $\mathbb{M}$)-model-WRSC if $\forall {\bf x}, {\bf y }\in \mathbb{R}^n$  and $\forall  S \in \mathbb{M}$ with $\text{supp}({\bf x}) \cup \text{supp}({\bf y}) \subseteq S $, the following inequality holds for some $\xi > 0$ and $0 < \delta < 1$: 
\begin{eqnarray}
\| {\bf x} - {\bf y} - \xi \nabla_S f( {\bf x}) + \xi \nabla_S f( {\bf y})\|_2 \le \delta \| {\bf x} - {\bf y} \|_2. 
\end{eqnarray}
\end{definition}

The \textsc{WRSC} is weaker than the popular \textit{Restricted Strong Convexity/Smoothness} (RSC/RSS) conditions that are used in theoretical analysis of convex optimization algorithms~\cite{yuan2013gradient}. The RSC condition basically characterizes cost functions that have quadratic bounds on the derivative of the objective function when restricted to model-sparse vectors. The \textsc{RSC/RSS} conditions imply condition \textsc{WRSC}, which indicates that WRSC is no stronger than \textsc{RSC/RSS}~\cite{yuan2013gradient}. In the special case where $f({\bf x}) = \| {\bf y} - A {\bf x}\|_2^2$ and $\xi = 1$, the condition ($\xi$, $\delta$, $\mathbb{M}$)-model-\textsc{WRSC} reduces to the well known Restricted Isometry Property (\textsc{RIP}) condition in compressive sensing.

\begin{theorem}Consider the sparsity model  of connected subgraphs $\mathbb{M}(\mathbb{G}, k)$ for some $k \in \mathbb{N}$ and a cost function $f: \mathbb{R}^n \rightarrow \mathbb{R}$ that satisfies the $\left(\xi, \delta, \mathbb{M}(\mathbb{G}, 5k)\right)$-model-WRSC condition. If $\eta = c_H(1 - \delta) - \delta$ then for any $ {\bf x} \in \mathbb{R}^n$ such that $\text{supp}({\bf x}) \in \mathbb{M}(\mathbb{G}, k)$, with $\eta > 0$ the iterates of Algorithm 2 obey
\begin{small}
\begin{eqnarray}
\|{\bf x}^{i+1}-{\bf x}\|_2 \le \alpha \|{\bf x}^i-{\bf x}\|_2 + \beta \|\nabla_I f({\bf x})\|_2
\label{equation_10}
\end{eqnarray}
\end{small}
where \begin{small}\[\alpha_0 = c_H (1-\delta) - \delta, \beta_0 = \delta (1+c_H), \] \[\alpha = \frac{\sqrt{2}(1 + c_T)}{1 - \delta} \left(\sqrt{1 - \alpha_0^2} + \left( (2 -\frac{\eta}{\xi})\delta + 1 - \frac{\eta}{\xi}\right) \right),\] \[\beta = \frac{1 + c_T}{1 - \delta} \left( (1+2\sqrt{2})\xi + (2-2\sqrt{2})\eta + \frac{\sqrt{2}\beta_0}{\alpha_0} +  \frac{\sqrt{2} \alpha_0 \beta_0}{\sqrt{1 - \alpha_0^2}}\right),\]\end{small}  
and 
\begin{small}$
I = \arg \max_{S \in \mathbb{M}(\mathbb{G}, 8k)} \|\nabla_{S} f(x)\|_2 
$\end{small}\label{theorem-convergence:Graph-IHT}
\end{theorem}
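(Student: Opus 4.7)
The plan is to mimic the approximation-based iterative hard thresholding analysis (in the spirit of \textsc{Graph-Cosamp} of Hegde--Indyk--Schmidt and of \textsc{GraSP}), with the Restricted Isometry hypothesis replaced by the weaker WRSC condition. The argument decomposes into three coupled estimates: (i) use the tail oracle to reduce control of $\|{\bf x}^{i+1}-{\bf x}\|_2$ to control of $\|{\bf b}-{\bf x}\|_2$; (ii) apply WRSC to expand ${\bf b}-{\bf x}=({\bf x}^i-{\bf x})-\eta\nabla_\Omega f({\bf x}^i)$ into a contraction in $\|{\bf x}^i-{\bf x}\|_2$ plus a residual gradient living on a small support; and (iii) use the head oracle to bound the portion of the residual that lives off $\Omega$, so that everything ultimately collapses into a linear recursion of the form $\alpha\|{\bf x}^i-{\bf x}\|_2+\beta\|\nabla_I f({\bf x})\|_2$.

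For step (i), since $\mathrm{supp}({\bf x})\in\mathbb{M}(\mathbb{G},k)\subseteq\mathbb{M}(\mathbb{G},5k)$, the tail guarantee applied to ${\bf b}$ yields $\|{\bf b}-{\bf b}_{S^{i+1}}\|_2\le c_T\|{\bf b}-{\bf b}_{\mathrm{supp}({\bf x})}\|_2\le c_T\|{\bf b}-{\bf x}\|_2$, and the triangle inequality then gives $\|{\bf x}^{i+1}-{\bf x}\|_2\le(1+c_T)\|{\bf b}-{\bf x}\|_2$. A Pythagorean refinement on $S^{i+1}\cup\mathrm{supp}({\bf x})$ against its complement is what ultimately produces the extra $\sqrt{2}$ factor that appears in the final constant $\alpha$.

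For steps (ii) and (iii) I would pick a connected set $R$ that contains $\mathrm{supp}({\bf x}^i)\cup\mathrm{supp}({\bf x})\cup\Omega$ and that lies in the WRSC model, and then write
\begin{align*}
{\bf b}-{\bf x} &= \bigl[({\bf x}^i-{\bf x})-\xi\nabla_R f({\bf x}^i)+\xi\nabla_R f({\bf x})\bigr] \\
&\quad+ (\xi-\eta)\nabla_\Omega f({\bf x}^i) + \xi\nabla_{R\setminus\Omega}f({\bf x}^i) - \xi\nabla_R f({\bf x}).
\end{align*}
WRSC bounds the bracketed expression by $\delta\|{\bf x}^i-{\bf x}\|_2$ and absorbs $\xi\nabla_R f({\bf x})$ into the noise $\xi\|\nabla_I f({\bf x})\|_2$. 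For the two middle gradient pieces, disjointness of $\Omega$ and $R\setminus\Omega$ gives $\|\nabla_R f({\bf x}^i)\|_2^2=\|\nabla_\Omega f({\bf x}^i)\|_2^2+\|\nabla_{R\setminus\Omega}f({\bf x}^i)\|_2^2$, and the head guarantee $\|\nabla_\Omega f({\bf x}^i)\|_2\ge c_H\|\nabla_{S'}f({\bf x}^i)\|_2$ for every $S'\in\mathbb{M}(\mathbb{G},2k)$ then yields $\|\nabla_{R\setminus\Omega}f({\bf x}^i)\|_2\le\sqrt{1-c_H^2}\,\|\nabla_R f({\bf x}^i)\|_2$, which is the origin of the $\sqrt{1-\alpha_0^2}$ factor in the theorem. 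A second use of WRSC converts $\xi\|\nabla_R f({\bf x}^i)\|_2$ into $(1+\delta)\|{\bf x}^i-{\bf x}\|_2+\xi\|\nabla_I f({\bf x})\|_2$, turning both gradient pieces into contributions to the recursion.

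Collecting all the pieces, choosing $\eta=c_H(1-\delta)-\delta=\alpha_0$ (which balances the penalties $(2-\eta/\xi)\delta$ and $1-\eta/\xi$ coming from the $(\xi-\eta)\nabla_\Omega f$ contribution against the $\sqrt{1-\alpha_0^2}$ gain obtained from the head step), and substituting back into the tail bound from step (i) produces the claimed $\|{\bf x}^{i+1}-{\bf x}\|_2\le\alpha\|{\bf x}^i-{\bf x}\|_2+\beta\|\nabla_I f({\bf x})\|_2$. The hard part will be step (iii): the head oracle is only guaranteed against connected comparison sets in $\mathbb{M}(\mathbb{G},2k)$, whereas $R\setminus\Omega$ is in general disconnected, so one either has to split it into a bounded number of connected components and apply the head inequality piecewise, or enlarge the WRSC model up to $\mathbb{M}(\mathbb{G},8k)$ in line with the size of the comparison support $I$ already appearing in the statement. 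Once this disconnection issue is resolved, the rest is bookkeeping, and the precise combinations of $c_H,c_T,\xi,\eta,\delta$ that fall out of the three interlocking estimates are exactly what produces the intricate closed-form expressions for $\alpha$ and $\beta$.
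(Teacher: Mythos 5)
Your outer skeleton (tail oracle to reduce to $\|{\bf b}-{\bf x}\|_2$, then WRSC, then the head oracle to control what the head step misses) matches the paper's strategy, but the way you deploy the head oracle in steps (ii)--(iii) has a genuine gap, and you have in effect flagged it yourself without repairing it. Your key inequality $\|\nabla_{R\setminus\Omega}f({\bf x}^i)\|_2\le\sqrt{1-c_H^2}\,\|\nabla_R f({\bf x}^i)\|_2$ requires $\|\nabla_\Omega f({\bf x}^i)\|_2\ge c_H\|\nabla_R f({\bf x}^i)\|_2$, i.e.\ the head output must capture a $c_H$ fraction of the gradient mass on $R$. But the head guarantee only compares $\Omega$ against connected sets in $\mathbb{M}(\mathbb{G},2k)$, whereas your $R\supseteq\mathrm{supp}({\bf x}^i)\cup\mathrm{supp}({\bf x})\cup\Omega$ has size up to $8k$ (and forcing it to be connected while staying inside the WRSC model may be impossible, since connecting the three supports can require arbitrarily many extra vertices). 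Neither of your proposed fixes closes this: splitting $R\setminus\Omega$ into connected pieces of size at most $2k$ and applying the head bound piecewise degrades the constant to something like $c_H/\sqrt{m}$ with $m$ the number of pieces, so the resulting contraction factor no longer has the form $\sqrt{1-\alpha_0^2}$ with $\alpha_0=c_H(1-\delta)-\delta$ and the stated $\beta$ cannot be recovered; and enlarging the WRSC model to $\mathbb{M}(\mathbb{G},8k)$ does nothing, because the obstruction sits in the head oracle's comparison class, not in the WRSC hypothesis. So at best your route yields a bound of the same qualitative shape with strictly worse, different constants, not the theorem as stated.

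The paper avoids this entirely by a different decomposition. It writes ${\bf b}-{\bf x}={\bf r}^i-\eta\nabla_\Omega f({\bf x}^i)$ with ${\bf r}^i={\bf x}^i-{\bf x}$, splits ${\bf r}^i$ into ${\bf r}^i_\Omega+{\bf r}^i_{\Omega^c}$, handles the $\Omega$-part by WRSC plus the gradient-Lipschitz consequence of WRSC (this is where the $\bigl(1-\eta/\xi+(2-\eta/\xi)\delta\bigr)\|{\bf r}^i\|_2$ and $\eta\|\nabla_I f({\bf x})\|_2$ terms come from), and then invokes a separate lemma bounding the \emph{signal} residual missed by the head step: $\|{\bf r}^i_{\Omega^c}\|_2\le\sqrt{1-\alpha_0^2}\,\|{\bf r}^i\|_2+\bigl(\beta_0/\alpha_0+\alpha_0\beta_0/\sqrt{1-\alpha_0^2}\bigr)\|\nabla_I f({\bf x})\|_2$. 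Crucially, in proving that lemma the head guarantee is applied only against $\Phi=\mathrm{supp}({\bf x})\in\mathbb{M}(\mathbb{G},k)$, which is a legitimate comparison set for the head oracle, and WRSC is then used in both directions to sandwich $\|\nabla_\Omega f({\bf x}^i)\|_2$ between a lower bound of the form $\tfrac{c_H(1-\delta)}{\xi}\|{\bf r}^i\|_2-c_H\|\nabla_I f({\bf x})\|_2$ and an upper bound involving $\|{\bf r}^i_\Omega\|_2$; combining gives $\|{\bf r}^i_\Omega\|_2\ge\alpha_0\|{\bf r}^i\|_2-\beta_0\|\nabla_I f({\bf x})\|_2$ and hence the stated bound on $\|{\bf r}^i_{\Omega^c}\|_2$. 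That transfer from gradient information to residual information through WRSC is the idea missing from your sketch, and it is exactly what produces $\alpha_0$, $\beta_0$ and $\sqrt{1-\alpha_0^2}$ in $\alpha$ and $\beta$. A minor further point: your attribution of the $\sqrt{2}$ factor to a Pythagorean refinement over $S^{i+1}\cup\mathrm{supp}({\bf x})$ in the tail step is not how the argument goes for \textsc{Graph-IHT}; there the tail step contributes only the $(1+c_T)$ factor via the triangle inequality (the $\sqrt{2}$-type constants arise in the \textsc{Graph-GHTP} analysis from splitting over the symmetric difference $\Phi\,\Delta\,\Psi$).
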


Before we prove this result, we give the following two lemmas~\ref{lemma:twoinequalities} and ~\ref{lemma:r-Complement}.

\begin{lemma}\cite{yuan2013gradient}
Assume that $f$ is a differentiable function. If $f$ satisfies condition $(\xi, \delta, \mathbb{M})$-WRSC, then $\forall {\bf x}, {\bf y} \in \mathbb{R}^n$ with $\text{supp}({\bf x})\cup \text{supp}({\bf y})\subset S \in \mathbb{M}$, the following two inequalities hold
\vspace{-1mm}
\begin{small}
\begin{eqnarray}
\frac{1 - \delta}{\xi} \|{\bf x} - {\bf y}\|_2 \le \|\nabla_S f({\bf x}) - \nabla_S f({\bf y})\|_2 \le \frac{1 + \delta}{\xi} \|{\bf x} - {\bf y}\|_2 \nonumber \\
f({\bf x}) \le f({\bf y}) + \langle \nabla f({\bf y}), {\bf x} - {\bf y} \rangle + \frac{1+\delta}{2\xi} \|{\bf x} - {\bf y}\|_2^2 \nonumber 
 \end{eqnarray}
 \end{small}\label{lemma:twoinequalities}
 \vspace{-2mm}
\end{lemma}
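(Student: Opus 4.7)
The plan is to derive both inequalities directly from the single WRSC bound by standard manipulations. The first inequality follows from applying the reverse triangle inequality to the WRSC defining expression, while the second follows by integrating the gradient along the line segment from ${\bf y}$ to ${\bf x}$ and invoking the upper bound obtained in the first part. Both derivations are essentially mechanical; the only thing requiring care is the support bookkeeping so that the restricted gradient $\nabla_S f$ can legitimately replace $\nabla f$ in the relevant inner products.

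For the first inequality, I would begin with the WRSC hypothesis $\|({\bf x}-{\bf y}) - \xi(\nabla_S f({\bf x}) - \nabla_S f({\bf y}))\|_2 \le \delta\|{\bf x}-{\bf y}\|_2$ and apply the reverse triangle inequality $|\|{\bf a}\|_2 - \|{\bf b}\|_2| \le \|{\bf a}-{\bf b}\|_2$ with ${\bf a} = {\bf x}-{\bf y}$ and ${\bf b} = \xi(\nabla_S f({\bf x}) - \nabla_S f({\bf y}))$. This yields $-\delta\|{\bf x}-{\bf y}\|_2 \le \|{\bf x}-{\bf y}\|_2 - \xi\|\nabla_S f({\bf x})-\nabla_S f({\bf y})\|_2 \le \delta\|{\bf x}-{\bf y}\|_2$, and after rearrangement and division by $\xi > 0$ both the lower and upper bounds fall out simultaneously.

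For the second inequality, I would invoke the fundamental theorem of calculus to write $f({\bf x}) - f({\bf y}) = \int_0^1 \langle \nabla f({\bf y}+t({\bf x}-{\bf y})),\, {\bf x}-{\bf y}\rangle\, dt$, and then subtract $\langle \nabla f({\bf y}), {\bf x}-{\bf y}\rangle$ to obtain an integral involving $\nabla f({\bf y}+t({\bf x}-{\bf y})) - \nabla f({\bf y})$. Because $\text{supp}({\bf x})\cup\text{supp}({\bf y})\subseteq S$, the difference ${\bf x}-{\bf y}$ is supported in $S$, so $\langle \nabla f({\bf z}), {\bf x}-{\bf y}\rangle = \langle \nabla_S f({\bf z}), {\bf x}-{\bf y}\rangle$ for any ${\bf z}$. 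Moreover, ${\bf y}+t({\bf x}-{\bf y})$ is itself supported in $S$, so the upper bound from the first part gives $\|\nabla_S f({\bf y}+t({\bf x}-{\bf y})) - \nabla_S f({\bf y})\|_2 \le \frac{1+\delta}{\xi}\cdot t\|{\bf x}-{\bf y}\|_2$. Combining with Cauchy--Schwarz inside the integrand and evaluating $\int_0^1 t\, dt = 1/2$ produces the required bound $\frac{1+\delta}{2\xi}\|{\bf x}-{\bf y}\|_2^2$.

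The main (relatively minor) obstacle is the support bookkeeping: I must verify that every point at which the gradient is evaluated, together with every difference appearing inside an inner product, has support contained in $S$, so that the WRSC-derived gradient bounds can legitimately be invoked. This holds automatically because $S$ contains $\text{supp}({\bf x})\cup\text{supp}({\bf y})$ and is closed under the convex combinations ${\bf y}+t({\bf x}-{\bf y})$ arising from the line integral, so no auxiliary WRSC instances at enlarged supports are required.
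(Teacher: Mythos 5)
Your proof is correct, and it is worth noting that the paper itself offers no proof of this lemma at all --- it is imported verbatim by citation from \cite{yuan2013gradient} --- so your self-contained derivation fills in exactly what the paper leaves to the reference, and it matches the standard argument given there. Both halves check out. For the first inequality, applying the reverse triangle inequality to ${\bf a} = {\bf x}-{\bf y}$ and ${\bf b} = \xi\left(\nabla_S f({\bf x})-\nabla_S f({\bf y})\right)$, whose difference the WRSC condition bounds by $\delta\|{\bf x}-{\bf y}\|_2$, gives $(1-\delta)\|{\bf x}-{\bf y}\|_2 \le \xi\|\nabla_S f({\bf x})-\nabla_S f({\bf y})\|_2 \le (1+\delta)\|{\bf x}-{\bf y}\|_2$ in one stroke, as you say. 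For the second, your support bookkeeping is exactly the point that needs care and you handle it correctly: ${\bf x}-{\bf y}$ is supported in $S$, so $\langle \nabla f({\bf z}), {\bf x}-{\bf y}\rangle = \langle \nabla_S f({\bf z}), {\bf x}-{\bf y}\rangle$ for every ${\bf z}$, and each segment point ${\bf y}+t({\bf x}-{\bf y})$ is again supported in $S$, so the first inequality legitimately applies to the pair $\left({\bf y}+t({\bf x}-{\bf y}),\, {\bf y}\right)$ with the \emph{same} $S \in \mathbb{M}$, requiring no enlarged-support instance of WRSC. One pedantic point you could add but need not: the fundamental theorem of calculus applied to $g(t)=f({\bf y}+t({\bf x}-{\bf y}))$ requires $g'$ to be integrable, which is automatic here because your first inequality, applied to pairs of points on the segment, shows $|g'(t)-g'(s)| \le \frac{1+\delta}{\xi}|t-s|\,\|{\bf x}-{\bf y}\|_2^2$, i.e., $g'$ is Lipschitz and in particular continuous on $[0,1]$.
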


\begin{lemma}
\label{lemma_2}
Let $\alpha_0 = c_H(1 - \delta) - \delta$, $\beta_0 = \xi(1 + c_H)$, ${\bf r}^i = {\bf x}^i - {\bf x}$, and $\Omega = H(\nabla f({\bf x}^i))$. Then
\begin{small}
\begin{eqnarray}
\|{\bf r}^i_{\Gamma^c}\|_2  \le \sqrt{1 - \alpha_0^2} \|{\bf r}^i\|_2 +\left[\frac{\beta_0}{\alpha_0} + \frac{\alpha_0\beta_0}{\sqrt{1-\alpha_0^2}}\right] \|\nabla_I f({\bf x})\|_2\nonumber 
\end{eqnarray}
\end{small}
 
\vspace{-2mm}
\noindent where \begin{small}
$I = \arg \max_{S \in \mathbb{M}(\mathbb{G}, 8k)} \|\nabla_S f({\bf x})\|_2.$\end{small} We assume that $c_H$ and $\delta$ are such that $\alpha_0 > 0$.  \label{lemma:r-Complement}
\end{lemma}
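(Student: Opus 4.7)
My strategy will be to first establish a lower bound of the form $\|r^i_\Omega\|_2 \ge \alpha_0 \|r^i\|_2 - \beta_0 \|\nabla_I f({\bf x})\|_2$ on the head-captured part of the residual $r^i := {\bf x}^i - {\bf x}$, and then convert it into the claimed upper bound on $\|r^i_{\Omega^c}\|_2$ via the Pythagorean identity $\|r^i\|_2^2 = \|r^i_\Omega\|_2^2 + \|r^i_{\Omega^c}\|_2^2$ (reading $\Gamma$ in the statement as $\Omega$).

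\textbf{Step 1 (head approximation plus WRSC).} Since $\Omega = H(\nabla f({\bf x}^i))$, the head guarantee gives $\|\nabla_\Omega f({\bf x}^i)\|_2 \ge c_H \|\nabla_{S'} f({\bf x}^i)\|_2$ for every $S' \in \mathbb{M}(\mathbb{G}, 2k)$. I would upper-bound $\|\nabla_\Omega f({\bf x}^i)\|_2$ using the Lipschitz half of Lemma~\ref{lemma:twoinequalities} by $\tfrac{1}{\xi}\|r^i_\Omega\|_2 + \tfrac{\delta}{\xi}\|r^i\|_2 + \|\nabla_\Omega f({\bf x})\|_2$, and (after choosing $S'$ to be a model-sparse subset of $\text{supp}(r^i)$ carrying essentially all of $r^i$'s mass) lower-bound $\|\nabla_{S'} f({\bf x}^i)\|_2$ by $\tfrac{1-\delta}{\xi}\|r^i\|_2 - \|\nabla_{S'} f({\bf x})\|_2$. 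Chaining these two estimates through the head inequality and absorbing $\|\nabla_{\Omega \cup S'} f({\bf x})\|_2$ into $\|\nabla_I f({\bf x})\|_2$ --- which is legal because the union of supports $\Omega$, $S'$, $\text{supp}({\bf x}^i)$, $\text{supp}({\bf x})$ has size at most $2k+2k+5k+k = 10k$ but in fact fits into a model set of size $\le 8k$ after the redundancies are removed, and hence inside some $I \in \mathbb{M}(\mathbb{G}, 8k)$ --- yields the lower bound with exactly $\alpha_0 = c_H(1-\delta) - \delta$ and $\beta_0 = \xi(1 + c_H)$.

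\textbf{Step 2 (Pythagoras and algebra).} For $\alpha_0 \|r^i\|_2 \ge \beta_0 \|\nabla_I f({\bf x})\|_2$, squaring the Step~1 lower bound and subtracting from $\|r^i\|_2^2$ gives
\[
\|r^i_{\Omega^c}\|_2^2 \le (1-\alpha_0^2)\|r^i\|_2^2 + 2\alpha_0\beta_0\|r^i\|_2\|\nabla_I f({\bf x})\|_2 - \beta_0^2\|\nabla_I f({\bf x})\|_2^2.
\]
Dropping the last (negative) term and using $(A+B)^2 \ge A^2 + 2AB$ with the choices $A = \sqrt{1-\alpha_0^2}\,\|r^i\|_2$ and $B = \tfrac{\alpha_0\beta_0}{\sqrt{1-\alpha_0^2}}\|\nabla_I f({\bf x})\|_2$ supplies exactly the $\tfrac{\alpha_0\beta_0}{\sqrt{1-\alpha_0^2}}$ coefficient in the stated bound. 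The complementary regime $\alpha_0\|r^i\|_2 < \beta_0 \|\nabla_I f({\bf x})\|_2$ is handled by the trivial estimate $\|r^i_{\Omega^c}\|_2 \le \|r^i\|_2 < \tfrac{\beta_0}{\alpha_0}\|\nabla_I f({\bf x})\|_2$, which is the source of the extra $\tfrac{\beta_0}{\alpha_0}$ coefficient; taking the maximum of the two regime bounds delivers the final inequality.

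\textbf{Main obstacle.} The delicate point is Step~1: because $\text{supp}(r^i)$ is a union of two connected subgraphs (the tail-approximation output $S^i$ and the true support of ${\bf x}$) and therefore need not itself lie in $\mathbb{M}(\mathbb{G}, 2k)$, I cannot simply take $S' = \text{supp}(r^i)$ in the head-approximation comparator. Threading between the head-approximation constraint $S' \in \mathbb{M}(\mathbb{G}, 2k)$ and the WRSC support requirement $\text{supp}({\bf x}^i)\cup\text{supp}({\bf x}) \subseteq S$ is what forces the additive $-\delta$ appearing in $\alpha_0$ (rather than the cleaner $c_H(1-\delta)$ one might naively expect) and the use of $I \in \mathbb{M}(\mathbb{G}, 8k)$ --- wide enough to enclose every support that appears in the chain of inequalities --- in the noise term.
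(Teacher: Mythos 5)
Your proposal follows the paper's proof in its architecture: first establish the lower bound $\|{\bf r}^i_{\Omega}\|_2 \ge \alpha_0\|{\bf r}^i\|_2 - \beta_0\|\nabla_I f({\bf x})\|_2$ by sandwiching the head-restricted gradient norm between a head-guarantee lower bound and a WRSC-based upper bound, then convert to the bound on $\|{\bf r}^i_{\Omega^c}\|_2$ via the Pythagorean identity and a two-regime case split. Your Step 2 is exactly the algebra the paper compresses into ``after a number of algebraic manipulations,'' and it is correct as written (including the source of the two coefficients $\beta_0/\alpha_0$ and $\alpha_0\beta_0/\sqrt{1-\alpha_0^2}$).

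The gap is in Step 1, at precisely the point you flag as the main obstacle. You take the head comparator $S'$ to be ``a model-sparse subset of $\text{supp}({\bf r}^i)$ carrying essentially all of ${\bf r}^i$'s mass,'' but no such set need exist: $\text{supp}({\bf r}^i)=\text{supp}({\bf x}^i)\cup\text{supp}({\bf x})$ may need up to $6k$ nodes (the tail oracle returns supports in $\mathbb{M}(\mathbb{G},5k)$) and be disconnected, and a comparator in $\mathbb{M}(\mathbb{G},2k)$ can miss an arbitrarily large fraction of $\|{\bf r}^i\|_2$; if it captures only part of the mass, your chain does not produce the advertised $\alpha_0=c_H(1-\delta)-\delta$. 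The paper resolves this differently: it compares the head output against $\Phi=\text{supp}({\bf x})\in\mathbb{M}(\mathbb{G},k)\subseteq\mathbb{M}(\mathbb{G},2k)$, a legal comparator independent of $\text{supp}({\bf r}^i)$, bounds $\|\nabla_\Phi f({\bf x}^i)\|_2 \ge \|\nabla_\Phi f({\bf x}^i)-\nabla_\Phi f({\bf x})\|_2 - \|\nabla_\Phi f({\bf x})\|_2$, and invokes the first inequality of Lemma~\ref{lemma:twoinequalities} to obtain $\tfrac{c_H(1-\delta)}{\xi}\|{\bf r}^i\|_2 - c_H\|\nabla_I f({\bf x})\|_2$; the matching upper bound on $\|\nabla_\Omega f({\bf x}^i)\|_2$ (enlarging the restriction to $\Omega$ union the residual support so that WRSC covers all of ${\bf r}^i$) is the same as yours. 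So the fix is to replace your $S'$ by $\text{supp}({\bf x})$ itself, after which your argument coincides with the paper's. A minor further point: your cardinality bookkeeping for $I$ ($10k$ ``after redundancies'') is looser than the paper's, which reaches $8k$ simply as $|\Omega|+|\text{supp}({\bf r}^i)|\le 2k+6k$.
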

\begin{proof}
Denote $\Phi = \text{supp}({\bf x}) \in \mathbb{M}(\mathbb{G}, {\bf k}), {\bf \Omega} = {\bf H}(\nabla f({\bf x}^i)) \in \mathbb{M}(\mathbb{G},2k)$, ${\bf r}^i = {\bf x}^i - {\bf x}$, and $\Lambda = \text{supp}({\bf r}^i) \in \mathbb{M}(\mathbb{G},6k)$. The component $\|\nabla_\Gamma f({\bf x}^i)\|_2$ can be lower bounded as
\begin{small}
\begin{eqnarray}
\|\nabla_\Gamma f({\bf x}^i)\|_2 &\ge& c_H \|\nabla_{\Phi} f({\bf x}^i) \|_2 \nonumber \\ &\ge& c_H (\| \nabla_\Phi f({\bf x}^i)- \nabla_\Phi f({\bf x}) \|_2  - \|\nabla_\Phi f({\bf x})\|_2 \nonumber\\
&\ge& \frac{c_H  (1 - \delta)}{\xi}   \|{\bf r}^i\|_2 - c_H \|\nabla_I f({\bf x})\|_2, \nonumber
\end{eqnarray} \end{small}
\noindent where the first inequality follows from the definition of head approximation and the last inequality follows from Lemma~\ref{lemma:twoinequalities} of our paper. The component $\|\nabla_\Gamma f({\bf x}^i)\|_2$ can also be upper bounded as
\vspace{-1mm}
\begin{small}
\begin{eqnarray}
\|\nabla_\Gamma f({\bf x}^i)\|_2 &\le&\frac{1}{\xi} \|\xi \nabla_\Gamma f({\bf x}^i)- \xi\nabla_\Gamma f({\bf x})\|_2 + \|\nabla_\Gamma f({\bf x})\|_2 \nonumber \\
&\le&  \frac{1}{\xi}  \|\xi \nabla_\Gamma f({\bf x}^i) -  \xi \nabla_\Gamma f({\bf x}) - {\bf r}^i_\Gamma + {\bf r}^i_\Gamma\|_2 + \nonumber \\  
&\mathrel{\phantom{=}}& \|\nabla_\Gamma f({\bf x})\|_2 \nonumber\\
&\le &  \frac{1}{\xi}  \| \xi \nabla_{\Gamma\cup \Omega} f({\bf x}^i) -  \xi \nabla_{\Gamma\cup \Omega} f({\bf x}) - {\bf r}^i_{\Gamma\cup \Omega}\|_2 + \nonumber \\
&\mathrel{\phantom{=}}& \|{\bf r}^i_\Gamma\|_2 + \|\nabla_\Gamma f({\bf x})\|_2 \nonumber\\
&\le&\frac{\delta}{\xi} \cdot  \|{\bf r}^i\|_2 + \frac{1}{\xi}\|{\bf r}^i_\Gamma\|_2+ \|\nabla_{I} f({\bf x})\|_2, \nonumber
\end{eqnarray}
\end{small}

\vspace{-3mm}
\noindent where the fourth inequality follows from condition $(\xi, \delta, \mathbb{M}(\mathbb{G},8k))$-WRSC and the fact that ${\bf r}^i_{\Gamma\cup \Omega} = {\bf r}^i$. Combining the two bounds and grouping terms, we obtain the inequality:
\begin{eqnarray}
\|{\bf r}^i_\Gamma\|  \ge \alpha_0 \|{\bf r}^i\|_2 - \xi(1+c_H) \|\nabla_I f({\bf x})\|_2. \nonumber 
\end{eqnarray}
We have $\|{\bf r}^i_\Gamma\|  \ge \alpha_0 \|{\bf r}^i\|_2 - \beta_0 \|\nabla_I f({\bf x})\|_2$. After a number of algebraic manipulations, we obtain the inequality
\begin{small}
\begin{eqnarray}
\|{\bf r}^i_{\Gamma^c}\|_2  \le \sqrt{1 - \alpha_0^2} \|{\bf r}^i\|_2 +\left[\frac{\beta_0}{\alpha_0} + \frac{\alpha_0\beta_0}{\sqrt{1-\alpha_0^2}}\right] \|\nabla_I f({\bf x})\|_2\nonumber ,
\end{eqnarray}
\end{small}
which proves the lemma.
\end{proof}

We give the formal proof of~\ref{theorem-convergence:Graph-IHT}.

\begin{proof}
From the traingle inequality, we have
\begin{eqnarray}
\|{\bf r}^{i+1}\|_2 &=& \|{\bf x}^{i+1} - {\bf x}\|_2 \nonumber \\
&=& \|{\bf b}_{\Psi} - {\bf x} \|_2 \nonumber \\
&\le & \|{\bf b} - {\bf x}\|_2  + \|{\bf b} - {\bf b}_{\Psi}\|_2 \nonumber\\
&\le& (1 + c_T) \|{\bf b} - {\bf x}\|_2 \nonumber \\
&=& (1 + c_T) \|{\bf x}^i - \eta \nabla_{\Omega} f({\bf x}^i) - {\bf x}^i\|_2 \nonumber \\
&=& (1 + c_T) \|{\bf r}^i - \eta \nabla_{\Omega} f({\bf x}^i)\|_2 \nonumber,
\end{eqnarray}
\noindent where $\nabla_{\Omega} f({\bf x}^i)$ is the projeceted vector of $f({\bf x}^i)$ in which the entries outoside $\Omega$ are set to zero and the entries in $\Omega$ are unchanged. $\|{\bf r}^i - \eta \nabla_{\Omega} f({\bf x}^i)\|_2$ has the inequalities
\begin{align}
\|{\bf r}^i - \eta \nabla_{\Omega} f({\bf x}^i)\|_2 = \|{\bf r}_{\Omega^c}^i + {\bf r}_{\Omega}^i - \eta \nabla_{\Omega} f({\bf x}^i)\| \nonumber \\
\leq \|{\bf r}_{\Omega^c}^i \|_2 + \| {\bf r}_{\Omega}^i - \eta \nabla_{\Omega} f({\bf x}^i) + \eta \nabla_{\Omega} f({\bf x}) -\eta \nabla_{\Omega} f({\bf x}) \| \nonumber \\
\leq \|{\bf r}_{\Omega^c}^i \|_2 + \| {\bf r}_{\Omega}^i - \eta \nabla_{\Omega} f({\bf x}^i) + \eta \nabla_{\Omega} f({\bf x}) \| + \| \eta \nabla_{\Omega} f({\bf x}) \| \nonumber \\
\leq \|{\bf r}_{\Omega^c}^i \|_2 + \| {\bf r}_{\Omega}^i - \xi \nabla_{\Omega} f({\bf x}^i) + \xi \nabla_{\Omega} f({\bf x}) \| + \nonumber \\ 
(\xi - \eta) \| \nabla_{\Omega} f({\bf x}^i) - \nabla_{\Omega} f({\bf x}) \|_2 \| + \| \eta \nabla_{\Omega} f({\bf x}) \|_2 \nonumber \\
\leq \|{\bf r}_{\Omega^c}^i \|_2 + (1-\eta / \xi + (2 - \eta / \xi ) \delta) \| {\bf r}^i\|_2 + \eta \| \nabla_{I} f({\bf x})\|_2\nonumber
\end{align}
where the last inequality follows from condition $(\xi,\delta,\mathbb{M})$-WRSC and Lemma~\ref{lemma:twoinequalities}. From Lemma~\ref{lemma:r-Complement}, we have
\begin{eqnarray}
\| {\bf r}_{\Gamma^c}^i\|_2 \leq \sqrt{1 - \alpha_0^2} \| {\bf r}^i \|_2 + \Big[\frac{\beta_0}{\alpha_0} + \frac{\alpha_0 \beta_0}{\sqrt{1 - \alpha_0^2}}\Big] \| \nabla_{I} f({\bf x})\|_2 \nonumber
\end{eqnarray}
Combining the above inequalities, we prove the theorem.
\end{proof}

Our proposed $\textsc{Graph-IHT}$ generalizes several existing sparsity-constrained optimization algorithms: 1) \textbf{Projected Gradient Descent (PGD)}~\cite{rockafellar1976monotone}. If we redefine $\textbf{H}(b) = \text{supp}({\bf b})$ and $\textbf{T}(b) = \text{supp}({\bf P}({\bf b}))$, where $\textbf{P}({\bf b})$ is the projection oracle defined in Equation~(\ref{eqn:projection}), then \textsc{Graph-IHT} reduces to the PGD method; 2) \textbf{Approximated Model-IHT(AM-IHT)}~\cite{hegde2014approximation}. If the cost function $f({\bf x})$ is defined as the least square cost function $f({\bf x}) = \| {\bf y - A x}\|_2^2$, then $\nabla f({\bf x})$ has the specific form $-{\bf A^{\mathsf{T}}(y - A)}$ and \textsc{Graph-IHT} reduces to the AM-IHT algorithm, the state-of-the-art variant of IHT for compressive sensing and linear regression problems. In particular, let ${\bf e = y - Ax}$. The component $\| \nabla f({\bf x^i})\|_2 = \| {\bf A^{\mathsf{T}} e}\|_2$ is upper bound by bounded by $\sqrt{1+\delta} \|{\bf e} \|_2$~\cite{hegde2014approximation}, Assume that $\xi = 1$ and $\eta = 1$. Condition $(\xi,\eta,\mathbb{M})$-WRSC then reduces to the RIP condition in compressive sensing. The convergence inequality~(\ref{equation_10}) then reduces to
\begin{equation}
    \| {\bf x}^{i+1} - {\bf x}\|_2 \leq \alpha' \| {\bf x^i - x}\|_2 + \beta'\| {\bf e}\|_2 ,
\end{equation}
where $\alpha' = (1+c_T) \Big[\delta+\sqrt{1-\alpha_0^2}\Big]$ and 
\begin{equation}
\beta' = (1+c_T) \Big[ \frac{(\alpha_0 + \beta_0) \sqrt{1+ \delta}}{\alpha_0} + \frac{\alpha_0 \beta_0 (\sqrt{1+\delta}) }{\sqrt{1-\alpha_0^2}} \Big]. \nonumber
\end{equation}

Surprisingly, the above convergence inequality \textbf{is identical to the convergence inequality of AM-IHT} derived in~\cite{hegde2014approximation} based on the RIP condition, which indicates that \textsc{Graph-IHT} has the same convergence rate and approximation error as AM-IHT, although we did not make any attempt to explore the special properties of the RIP condition. We note that the convergence properties of \textsc{Graph-IHT} hold in fairly general setups beyond compressive sensing and linear regression. As we consider \textsc{Graph-IHT} as a fast variant of \textsc{Graph-GHTP}, due to space limit we ignore the discussions about the convergence condition of \textsc{Graph-IHT}. The theoretical analysis of \textsc{Graph-GHTP} to be discussed in the next subsection can be readily adapted to the theoretical analysis of \textsc{Graph-IHT}.

\subsection{Theoretical Analysis of \textsc{Graph-GHTP}}
\begin{theorem}Consider the sparsity model  of connected subgraphs $\mathbb{M}(\mathbb{G}, k)$ for some $k \in \mathbb{N}$ and a cost function $f: \mathbb{R}^n \rightarrow \mathbb{R}$ that satisfies the $\left(\xi, \delta, \mathbb{M}(\mathbb{G}, 5k)\right)$-model-WRSC condition. If $\eta = c_H(1 - \delta) - \delta$ then for any $ {\bf x} \in \mathbb{R}^n$ such that $\text{supp}({\bf x}) \in \mathbb{M}(\mathbb{G}, k)$, with $\eta > 0$ the iterates of Algorithm 2 obey
\begin{small}
\begin{eqnarray}
\|{\bf x}^{i+1}-{\bf x}\|_2 \le \alpha \|{\bf x}^i-{\bf x}\|_2 + \beta \|\nabla_I f({\bf x})\|_2
\end{eqnarray}
\end{small}
where \begin{small}\[\alpha_0 = c_H (1-\delta) - \delta, \beta_0 = \delta (1+c_H), \] \[\alpha = \frac{\sqrt{2}(1 + c_T)}{1 - \delta} \left(\sqrt{1 - \alpha_0^2} + \left( (2 -\frac{\eta}{\xi})\delta + 1 - \frac{\eta}{\xi}\right) \right),\] \[\beta = \frac{1 + c_T}{1 - \delta} \left( (1+2\sqrt{2})\xi + (2-2\sqrt{2})\eta + \frac{\sqrt{2}\beta_0}{\alpha_0} +  \frac{\sqrt{2} \alpha_0 \beta_0}{\sqrt{1 - \alpha_0^2}}\right),\]\end{small}  
and 
\begin{small}$
I = \arg \max_{S \in \mathbb{M}(\mathbb{G}, 8k)} \|\nabla_{S} f(x)\|_2 
$\end{small}\label{theorem-convergence:Graph-GHTP}
\end{theorem}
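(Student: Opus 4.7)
The plan is to mirror the proof of Theorem~\ref{theorem-convergence:Graph-IHT} almost verbatim, changing only the step that bounds the intermediate vector ${\bf b}$: in \textsc{Graph-IHT}, ${\bf b}$ is one projected gradient step from ${\bf x}^i$, whereas in \textsc{Graph-GHTP} it is the exact restricted minimizer $\arg\min_{\text{supp}({\bf y})\subseteq\Psi} f({\bf y})$. All the outer scaffolding --- triangle inequality plus tail approximation, and the final appeal to Lemma~\ref{lemma:r-Complement} --- is reusable without change.

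First, I apply the triangle inequality together with the tail-approximation guarantee to obtain
\begin{equation*}
\|{\bf x}^{i+1}-{\bf x}\|_2 \le \|{\bf b}-{\bf x}\|_2 + \|{\bf b}-{\bf b}_{S^{i+1}}\|_2 \le (1+c_T)\|{\bf b}-{\bf x}\|_2,
\end{equation*}
which is the same opening move as in the \textsc{Graph-IHT} proof and reduces the task to bounding $\|{\bf b}-{\bf x}\|_2$.

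Second, I exploit the optimality condition $\nabla_\Psi f({\bf b})={\bf 0}$ that is built into Line~7 of Algorithm~\ref{Graph-GHTP}. Let $R=\Psi\cup\text{supp}({\bf x})$; since $\Psi\subseteq\text{supp}({\bf x}^i)\cup\Omega$, the support $R$ lies in $\mathbb{M}(\mathbb{G},8k)$. Applying the $(\xi,\delta,\mathbb{M}(\mathbb{G},8k))$-WRSC condition to the pair $({\bf b},{\bf x})$ and invoking the triangle inequality yields
\begin{equation*}
(1-\delta)\|{\bf b}-{\bf x}\|_2 \le \xi\|\nabla_{R\setminus\Psi} f({\bf b})\|_2 + \xi\|\nabla_R f({\bf x})\|_2,
\end{equation*}
where the first term uses $\nabla_\Psi f({\bf b})={\bf 0}$ and the second is dominated by $\xi\|\nabla_I f({\bf x})\|_2$. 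I then transfer the gradient at ${\bf b}$ back to a gradient at a point whose support lies inside $\Psi$ --- concretely ${\bf x}^i-\eta\nabla_\Omega f({\bf x}^i)$, whose support is exactly $\Psi$ --- using Lemma~\ref{lemma:twoinequalities}. This produces an expression in $\|{\bf x}^i-\eta\nabla_\Omega f({\bf x}^i)-{\bf x}\|_2$, which was already bounded inside the \textsc{Graph-IHT} proof by $\|{\bf r}^i_{\Omega^c}\|_2+(1-\eta/\xi+(2-\eta/\xi)\delta)\|{\bf r}^i\|_2+\eta\|\nabla_I f({\bf x})\|_2$, so the estimate can be reused verbatim.

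Third, I finish with Lemma~\ref{lemma:r-Complement} to replace $\|{\bf r}^i_{\Omega^c}\|_2$ by $\sqrt{1-\alpha_0^2}\|{\bf r}^i\|_2+(\beta_0/\alpha_0+\alpha_0\beta_0/\sqrt{1-\alpha_0^2})\|\nabla_I f({\bf x})\|_2$, multiply through by the accumulated $(1+c_T)/(1-\delta)$ factor from the tail and WRSC steps, and regroup the $\xi$, $\eta$, and $\delta$ terms to recover exactly the stated constants $\alpha$ and $\beta$. The $\sqrt{2}$ factors arise from handling $\|{\bf r}^i_\Omega\|_2$ and $\|{\bf r}^i_{\Omega^c}\|_2$ jointly in a single Euclidean bound.

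The main obstacle will be step two: unlike the plain gradient step of \textsc{Graph-IHT}, ${\bf b}$ is defined only implicitly as a constrained minimizer, so the argument must convert $\nabla f({\bf b})$ into a quantity that can be compared to gradients at ${\bf x}^i$ via WRSC. Once I recognize that $\nabla_\Psi f({\bf b})=0$ confines the mass of $\nabla_R f({\bf b})$ to the small ``missed'' support $R\setminus\Psi\subseteq\text{supp}({\bf x})\setminus\Psi$, whose norm is controllable by $\|\nabla_I f({\bf x})\|_2$ together with a term proportional to $\|{\bf r}^i\|_2$, the remaining manipulations are algebraically identical to the \textsc{Graph-IHT} proof and yield the same closed-form constants $\alpha$ and $\beta$.
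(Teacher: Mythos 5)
Your opening and closing moves match the paper's proof --- the $(1+c_T)$ reduction via the tail guarantee and the final appeal to Lemma~\ref{lemma:r-Complement} --- but your second step has a genuine gap. You propose to bound $\|{\bf b}-{\bf x}\|_2$ entirely through gradient norms: from Lemma~\ref{lemma:twoinequalities}, $(1-\delta)\|{\bf b}-{\bf x}\|_2 \le \xi\|\nabla_{R\setminus\Psi}f({\bf b})\|_2+\xi\|\nabla_R f({\bf x})\|_2$, and then to control $\|\nabla_{R\setminus\Psi}f({\bf b})\|_2$ by ``transferring'' the gradient at ${\bf b}$ to the point ${\bf y}={\bf x}^i-\eta\nabla_\Omega f({\bf x}^i)$. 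The only tool available for such a transfer is the Lipschitz-type upper bound of Lemma~\ref{lemma:twoinequalities}, which gives $\|\nabla_{R\setminus\Psi}f({\bf b})\|_2\le\|\nabla_{R\setminus\Psi}f({\bf y})\|_2+\frac{1+\delta}{\xi}\|{\bf b}-{\bf y}\|_2$ (and similarly with ${\bf x}$ in place of ${\bf y}$). Since $\|{\bf b}-{\bf y}\|_2$ can only be controlled through $\|{\bf b}-{\bf x}\|_2+\|{\bf x}-{\bf y}\|_2$, this puts $\|{\bf b}-{\bf x}\|_2$ back on the right-hand side with coefficient $1+\delta>1-\delta$, so the inequality cannot be rearranged; the argument is circular and never yields the claimed expression in $\|{\bf y}-{\bf x}\|_2$ alone.

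The paper closes exactly this gap by splitting ${\bf b}-{\bf x}$ into its $\Psi$ and $\Psi^c$ parts instead of bounding it in one shot. On $\Psi$ it uses the optimality condition $\nabla_\Psi f({\bf b})={\bf 0}$ inside an inner-product (not norm) estimate, $\|({\bf b}-{\bf x})_\Psi\|_2^2=\langle {\bf b}-{\bf x}-\xi\nabla_\Psi f({\bf b})+\xi\nabla_\Psi f({\bf x}),({\bf b}-{\bf x})_\Psi\rangle-\langle\xi\nabla_\Psi f({\bf x}),({\bf b}-{\bf x})_\Psi\rangle$, which gives $\|({\bf b}-{\bf x})_\Psi\|_2\le\delta\|{\bf b}-{\bf x}\|_2+\xi\|\nabla_\Psi f({\bf x})\|_2$ --- the coefficient here is $\delta$, small enough to absorb and produce the $1/(1-\delta)$ factor. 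On $\Psi^c$ it observes $({\bf b}-{\bf x})_{\Psi^c}=-{\bf x}_{\Phi\setminus\Psi}$ and bounds this missed-signal mass by comparing $\|({\bf x}^i-\eta\nabla_\Omega f({\bf x}^i))_\Phi\|_2\le\|({\bf x}^i-\eta\nabla_\Omega f({\bf x}^i))_\Psi\|_2$, cancelling the contribution on $\Phi\cap\Psi$, and passing to the symmetric difference $\Phi\,\Delta\,\Psi$; that is where the $\sqrt{2}$ factors actually arise (two disjointly supported pieces merged into one Euclidean norm), not from ``handling $\|{\bf r}^i_\Omega\|_2$ and $\|{\bf r}^i_{\Omega^c}\|_2$ jointly'' as you assert. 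Only after this split does Lemma~\ref{lemma:r-Complement} enter, and only through it do the stated constants $\alpha$ and $\beta$ of Theorem~\ref{theorem-convergence:Graph-GHTP} emerge; your proposal asserts the constants but the route you describe cannot derive them.
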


\begin{proof}
Denote $\Omega = \text{H}(\nabla f({\bf x}^i))$ and $\Psi = \text{supp}({\bf x}^i - \eta \cdot \nabla_\Omega f({\bf x}^i))$. Let ${\bf r}^{i+1} = {\bf x}^{i+1} - {\bf x}$. $\|{\bf r}^{i+1}\|_2$ is bounded as 
\vspace{-1mm}
\begin{small}
\begin{eqnarray}
\|{\bf r}^{i+1}\|_2 = \|{\bf x}^{i+1} - {\bf x}\|_2 
&\le & \|{\bf x}^{i+1} - {\bf b}\|_2  + \|{\bf x} - {\bf b}\|_2 \nonumber\\
&\le& c_T \|{\bf x} - {\bf b}\|_2  + \|{\bf x} - {\bf b}\|_2 \nonumber\\
&\le& (1 + c_T) \|{\bf x} - {\bf b}\|_2, \label{proof-theorem63-first-inequality}
\end{eqnarray}
\end{small}

\vspace{-4mm}
\noindent where the second inequality follows from the definition of tail approximation. 
The component $\|({\bf x} - {\bf b})_\Psi \|_2^2$ is bounded as 
\begin{small}
\begin{eqnarray}
\|({\bf x} - {\bf b})_{\Psi}\|_2^2 = \langle {\bf b} - {\bf x}, ({\bf b} - {\bf x})_{\Psi} \rangle \nonumber \\ 
= \langle {\bf b} - {\bf x} - \xi \nabla_{\Psi} f({\bf b}) + \xi \nabla_{\Psi} f({\bf x}), ({\bf b} - {\bf x})_{\Psi}\rangle -    \nonumber \\
 \langle\xi \nabla_{\Psi} f({\bf x}), ({\bf b} - {\bf x})_{\Psi} \rangle \nonumber \\
 \le \delta \|{\bf b} - {\bf x}\|_2 \|({\bf b} - {\bf x})_{\Psi}\|_2 + \xi \|\nabla_{\Psi} f({\bf x})\|_2 \|({\bf b} - {\bf x})_{\Psi}\|_2 \nonumber,
\end{eqnarray}
\end{small}

\vspace{-3mm}
\noindent where the second equality follows from the fact that $\nabla_S f({\bf b}) = {\bf 0}$ since ${\bf b}$ is the solution to the problem in the third Step (Line 7) of \textsc{Graph-GHTP}, and the last inequality can be derived from condition $(\xi, \delta, \mathbb{M}(\mathbb{G}, 8k))$-WRSC. 
After simplification, we have \begin{small}\[\|({\bf x} - {\bf b})_{\Psi}\|_2 \le \delta \|{\bf b} - {\bf x}\|_2 + \xi \|\nabla_{\Psi} f({\bf x})\|_2.\]\end{small} 
\vspace{-3mm}
\noindent It follows that 
\begin{small}\begin{eqnarray}
\|{\bf x} - {\bf b}\|_2 \le \|({\bf x} - {\bf b})_{\Psi}\|_2 + \|({\bf x} - {\bf b})_{{\Psi}^c}\|_2 \nonumber \\
\le \delta \|{\bf b} - {\bf x}\|_2 + \xi \|\nabla_{\Psi} f({\bf x})\|_2 + \|({\bf x} - {\bf b})_{{\Psi}^c}\|_2. \nonumber 
\end{eqnarray}\end{small}

\vspace{-3mm}
\noindent After rearrangement we obtain 
\begin{small}
\begin{eqnarray}
\|{\bf b} - {\bf x}\|_2 &\le& \frac{\|({\bf b} - {\bf x})_{{\Psi}^c}\|_2}{1 - \delta} + \frac{\xi \|\nabla_{\Psi} f({\bf x})\|_2}{1 - \delta}, \label{eqn:theorem63-1}
\end{eqnarray}
\end{small}

\vspace{-3mm}
\noindent where this equality follows from the fact that $\text{supp}({\bf b}) \subseteq S$. 
Let $\Phi = \text{supp}({\bf x}) \in \mathbb{M}(\mathbb{G}, k)$. 
\begin{small}
\begin{eqnarray}\| ({\bf x}^i - \eta \nabla_\Omega f({\bf x}^i))_\Phi \|_2 \le \| ({\bf x}^i - \eta \nabla_\Omega f({\bf x}^i))_{\Psi} \|_2, \nonumber\end{eqnarray}\end{small}

\vspace{-4mm}
\noindent as ${\Psi}=\text{supp}({\bf x}^i - \eta \cdot \nabla_\Omega f({\bf x}^i))$. By eliminating the contribution on $\Phi \cap {\Psi}$, we derive
\begin{small}
\begin{eqnarray}
\|({\bf x}^i - \eta  \nabla_\Omega f({\bf x}^i))_{\Phi\setminus {\Psi}}\|_2 \le \|({\bf x}^i - \eta  \nabla_\Omega f({\bf x}^i))_{{\Psi}\setminus \Phi}\|_2 \nonumber 
\end{eqnarray}
\end{small}
\vspace{-4mm}

\noindent For the right-hand side, we have 
\vspace{-2mm}
\begin{small}\begin{eqnarray}
\|({\bf x}^i - \eta  \nabla_{\Omega} f({\bf x}^i))_{{\Psi}\setminus \Phi}\|_2 
\le\nonumber \\
 \|({\bf x}^i  - {\bf x} - \eta \nabla_{\Omega} f({\bf x}^i)+ \eta \nabla_{\Omega} f({\bf x}))_{{\Psi}\setminus \Phi} \|_2  + \eta \|\nabla_{{\Omega} \cup {\Psi}} f({\bf x})\|_2,\nonumber
\end{eqnarray}\end{small}

\vspace{-4mm}
\noindent where the inequality falls from the fact that $\Phi = \text{supp}({\bf x})$. From the left-hand side, we have 
\vspace{-2mm}
\begin{small}\begin{eqnarray}
\|({\bf x}^i - \eta \nabla_\Omega f({\bf x}^i))_{\Phi \setminus {\Psi}}\|_2 \le  - \eta  \|\nabla_{\Omega \cup \Phi} f({\bf x})\|_2 + \nonumber \\
\|({\bf x}^i - {\bf x} - \eta  \nabla_\Omega f({\bf x}^i) + \eta   \nabla_\Omega f({\bf x}))_{\Phi \setminus {\Psi}} + ({\bf x} - {\bf b})_{{\Psi}^c}\|_2  \nonumber
\end{eqnarray}\end{small}

\vspace{-4mm}
\noindent where the inequality follows from the fact that ${\bf b}_{{\Psi}^c} = {\bf 0}$, ${\bf x}_{\Phi \setminus {\Psi}} = {\bf x}_{{\Psi}^c}$, and $-{\bf x}_{\Phi \setminus {\Psi}} + ({\bf x} - {\bf b})_{{\Psi}^c} = {\bf 0}$. Let $\Phi \Delta {\Psi}$ be the symmetric difference of the set $\Phi$ and ${\Psi}$. It follows that
\begin{small}
\begin{eqnarray}
\|({\bf b} - {\bf x})_{{\Psi}^c}\|_2  \nonumber \\
\le\sqrt{2}\|({\bf x}^i  - {\bf x} - \eta  \nabla_\Omega f({\bf x}^i)+ \eta \nabla_\Omega f({\bf x}))_{\Phi \Delta {\Psi}} \|_2 + 2 \eta \|\nabla_{I} f({\bf x})\|_2 \nonumber \\
\le \sqrt{2}\|({\bf x}^i  - {\bf x} - \xi  \nabla_\Omega f({\bf x}^i)+ \xi \nabla_\Omega f({\bf x}))_{\Phi \Delta {\Psi}} \|_2 + \nonumber \\
\sqrt{2}(\xi-\eta)\| (\nabla_\Omega f({\bf x}^i)+ \nabla_\Omega f({\bf x}))_{\Phi \Delta {\Psi}}\| + 2 \eta \|\nabla_{I} f({\bf x})\|_2  \nonumber\\
\le \sqrt{2}\|({\bf r}^i_{\Omega^c} + {\bf r}^i_{\Omega} - \xi  \nabla_\Omega f({\bf x}^i)+ \xi \nabla_\Omega f({\bf x}))_{\Phi \Delta {\Psi}} \|_2 + \nonumber\\ \sqrt{2}(\xi-\eta)\| (\nabla_\Omega f({\bf x}^i) -  \nabla_\Omega f({\bf x}))_{\Phi \Delta {\Psi}}\| + 2 \eta \|\nabla_{I} f({\bf x})\|_2 \nonumber \\
\le \sqrt{2}\|{\bf r}^i_{\Omega^c} \|+ \sqrt{2}\|({\bf r}^i_{\Omega} - \xi \nabla_\Omega f({\bf x}^i)+ \xi \nabla_\Omega f({\bf x}))_{\Psi \Delta {\Psi}} \|_2 +  \nonumber \\
\sqrt{2}(\xi-\eta )\| (\nabla_\Omega f({\bf x}^i)- \nabla_\Omega f({\bf x}))_{\Psi \Delta {\Psi}}\| + 2 \eta \|\nabla_{I} f({\bf x})\|_2 \nonumber \\
\le \sqrt{2}\|{\bf r}^i_{\Omega^c} \|+ \sqrt{2}\|{\bf r}^i - \xi \nabla_{\Omega\cup {\Psi}\cup \Phi} f({\bf x}^i)+ \xi \nabla_{\Omega\cup {\Psi}\cup \Phi} f({\bf x}) \|_2 +  \nonumber \\
\sqrt{2}(\xi-\eta )\| (\nabla_{\Omega\cup {\Psi}\cup \Phi} f({\bf x}^i)- \nabla_{\Omega\cup {\Psi}\cup \Phi} f({\bf x}))_{\Psi \Delta {\Psi}}\| + 2 \eta \|\nabla_{I} f({\bf x})\|_2 \nonumber \\
\le \sqrt{2}\|r^i_{{\Omega}^c}\|_2 +  \sqrt{2} \left(\left(2 -\frac{\eta}{\xi}\right )\delta + 1 - \frac{\eta}{\xi}\right) \|r^i\| + \nonumber \\  2 \left( \sqrt{2}\xi + (1-\sqrt{2})\eta \right)\|\nabla_{I} f(x)\|_2 \nonumber \label{eqn:theorem63-2},
\end{eqnarray}\end{small}
\noindent where the first inequality follows from the fact that \[ \eta \|\nabla_{\Omega \cup \Phi} f({\bf x})\|_2 + \eta \|\nabla_{{\Psi}\cup \Phi\cup \Omega} f({\bf x})\|_2 \le 2 \eta \|\nabla_{I} f({\bf x})\|_2,\] the third inequality follows as ${\bf x}^i - {\bf x} = {\bf r}^i = {\bf r}^i_{\Omega^c} + {\bf r}^i_{\Omega}$, 
the fourth inequality follows from the fact that $\|({\bf r}^i_{\Omega^c})_{\Phi \Delta {\Psi}}\|_2 \le \|{\bf r}^i_{\Omega^c}\|_2$, the fifth inequality follows as ${\bf r}^i \subseteq \Omega\cup {\Psi}\cup \Phi$, and the last inequality follows from condition $(\xi, \delta, \mathbb{M}(\mathbb{G}, 8k))$-WRSC and Lemma~\ref{lemma:twoinequalities}. From Lemma~\ref{lemma:r-Complement}, we have 
\vspace{-2mm}
\begin{small}
\begin{eqnarray}
\|{\bf r}^i_{{\Omega}^c}\|_2 \le \sqrt{1 - \eta^2} \|{\bf r}^i\|_2 + \left[\frac{\xi(1 + c_H)}{\eta} + \frac{\xi \eta (1 + c_H)}{\sqrt{1 - \eta^2}}\right] \|\nabla_I f({\bf x})\|_2 \nonumber \label{eqn:theorem63-3}
\end{eqnarray}
\end{small}
\noindent Combining~(\ref{eqn:theorem63-1}) and above inequalities, we prove the theorem. 
\end{proof}

Theorem~\ref{theorem-convergence:Graph-GHTP} shows the estimator error of \textsc{Graph-GHTP} is determined by the multiple of $\| \nabla_{S} f({\bf x})\|_2$, and the convergence rate is geometric. Specifically, if ${\bf x}$ is an uncontrained minimizer of $f({\bf x})$, then $\nabla f({\bf x}) = 0$. It means \textsc{Graph-GHTP} is guaranteed to obtain the true ${\bf x}$ to arbitrary precision. The estimation error is negligible when ${\bf x}$ is sufficiently close to an unconstrained minimizer of $f({\bf x})$ as $\| \nabla_S f({\bf x})\|_2$ is a small value.
The parameter \[\alpha = \frac{\sqrt{2}(1 + c_T)}{1 - \delta} \left(\sqrt{1 - \alpha_0^2} + \left( (2 -\frac{\eta}{\xi})\delta + 1 - \frac{\eta}{\xi}\right) \right) < 1,\] controls the convergence rate of \textsc{Graph-GHTP}. Our algorithm allows  an exact recovery if $\alpha < 1$. As $\delta$ is an arbitrary constant parameter, it can be an arbitrary small positive value. Let $\eta$ be $\xi$ and $\delta$ be an arbitrary small positive value, the parameters $c_H$ and $c_T$ satisfy the following inequality
\begin{eqnarray}
c^2_H > 1 - 1/ (1+c_T)^2.
\label{condtion_12}
\end{eqnarray}
It is noted that the head and tail approximation algorithms described in~\cite{hegde2015nearly} do not meet the inequality~(\ref{condtion_12}). Nonetheless, the approximation factor $c_H$ of any given head approximation algorithm can be boosted to any arbitrary constant $c_H^\prime < 1$, which leads to the satisfaction of the above condition as shown in~\cite{hegde2015nearly}.  Boosting the head-approximation algorithm, though strongly suggested by~\cite{hegde2014approximation}, is not empirically necessary.

Our proposed \textsc{Graph-GHTP} has strong connections to the recently proposed algorithm named as Gradient Hard Thresholding Pursuit (\textsc{GHTP})~\cite{yuan2013gradient} that is designed specifically for the k-sparsity model: $\mathbb{M} = \{S\subseteq [n]\ |\ |S| \le k\}$. In particular, if we redefine $\text{H}({\bf b}) = \text{supp}({\bf b})$ and $\text{T}({\bf b})= \text{supp}(\text{P}({\bf b}))$, where $\text{P}({\bf b})$ is the projection oracle defined in Equation~(\ref{eqn:projection}), and assume that there is an algorithm that solves the projection oracle exactly, in which the sparsity model does not require to be the $k$-sparsity model. It then follows that the upper bound of $\|{\bf r}^i_{\Omega^c}\|_2$ stated in Lemma~\ref{lemma:twoinequalities} in Appendix is updated as $\|{\bf r}^i_{\Omega^c}\|_2 \le 0$, since $\text{supp}({\bf r}^i) = \Omega$ and ${\bf r}^i_{\Omega^c} = {\bf 0}$. In addition, the multiplier $(1+c_T)$ is replaced as $1$ as the first inequality~(\ref{proof-theorem63-first-inequality}) in the proof of Theorem~\ref{theorem-convergence:Graph-GHTP} in Appendix is updated as $\|{\bf r}^{i+1}\|_2 \le \|{\bf x} - {\bf b}\|_2$, instead of the original version $\|{\bf r}^{i+1}\|_2 \le  (1 + c_T) \|{\bf x} - {\bf b}\|_2$. After these two changes, the shrinkage rate $\alpha$ is updated as 
\begin{eqnarray}\alpha = \frac{\sqrt{2}}{1 - \delta}\left((2 -\frac{\eta}{\xi})\delta + 1 - \frac{\eta}{\xi} \right),\label{shrinkageratespecial}
\end{eqnarray}
which is \textbf{the same as the shrinkage rate of \textsc{Graph-GHTP}} as derived in~\cite{yuan2013gradient} specifically for the k-sparsity model. The above shrinkage rate $\alpha$~(\ref{shrinkageratespecial}) should satisfy the condition $\alpha< 1$ to ensure the geometric convergence of \textsc{Graph-GHTP}, which implies that 
\begin{eqnarray}\eta > ((2\sqrt{2} + 1)\delta + \sqrt{2} - 1)\xi / (\sqrt{2} + \sqrt{2}\delta).\end{eqnarray}It follows that if $\delta < 1/(\sqrt{2} + 1)$, a step-size $\eta < \xi$ can always be found to satisfy the above inequality. This constant condition of $\delta$ is analogous to the constant condition of state-of-the-art compressive sensing methods that consider noisy measurements~\cite{needell2009cosamp} under the assumption of the RIP condition. We derive the analogous constant using the WRSC condition that weaker than the RIP condition. 

As discussed above, our proposed \textsc{Graph-GHTP} has connections to \textsc{GHTP} on the shrinkage rate of geometric convergence. We note that the shrinkage rate of our proposed \textsc{Graph-GHTP} stated in Theorem~\ref{theorem-convergence:Graph-GHTP} is derived based on head and tail approximations of the sparsity model  of connected subgraphs $\mathbb{M}(\mathbb{G}, k)$, instead of the k-sparsity model that has an exact projection oracle solver. Our convergence properties hold in fairly general setups beyond k-sparsity model, as a number of popular structured sparsity models such as the ``standard'' $k$-sparsity, block sparsity, cluster sparsity, and tree sparsity can be encoded as special cases of $\mathbb{M}(\mathbb{G}, k)$.

\begin{theorem}
Let ${\bf x} \in \mathbb{R}^n$ such that $\text{supp}({\bf x}) \in \mathbb{M}(\mathbb{G}, k)$, and $f: \mathbb{R}^n \rightarrow \mathbb{R}$ be cost function that satisfies condition $\left(\xi, \delta, \mathbb{M}(8k, g)\right)$-WRSC. Assuming that $\alpha < 1$, \textsc{Graph-GHTP} (or \textsc{Graph-IHT}) returns a $\hat{\bf x}$ such that,  $\text{supp}(\hat{\bf x}) \in \mathbb{M}(5k, g)$ and $\|{\bf x} - \hat{\bf x}\|_2 \le c \|\nabla_I f({\bf x})\|_2$, where $c= (1+\frac{\beta}{1-\alpha})$ is a fixed constant. Moreover, \textsc{Graph-GHTP} runs in time 
\begin{eqnarray}
O\left((T+|\mathbb{E}|\log^3 n) \log (\|{\bf x}\|_2/ \|\nabla_I f({\bf x})\|_2)\right) \label{timecomplexity},
\end{eqnarray}
where $T$ is the time complexity of one execution of the subproblem in Step 6 in \textsc{Graph-GHTP} (or Step 5 in \textsc{Graph-IHT}). In particular, if $T$ scales linearly with $n$, then \textsc{Graph-GHTP} (or \textsc{Graph-IHT}) scales nearly linearly with $n$. 
\label{Theorem:runningTime}
\end{theorem}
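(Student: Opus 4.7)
The plan is to combine the per-iteration geometric contraction from Theorem~\ref{theorem-convergence:Graph-GHTP} (respectively Theorem~\ref{theorem-convergence:Graph-IHT}) with a stopping criterion that balances the contracted initial error against the residual gradient term, and then bound the cost of each iteration using the known running times of the head/tail oracles.

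\textbf{Step 1: Iterate the contraction.} Starting from ${\bf x}^0 = {\bf 0}$ and unrolling the recurrence
\[
\|{\bf x}^{i+1} - {\bf x}\|_2 \le \alpha \|{\bf x}^i - {\bf x}\|_2 + \beta \|\nabla_I f({\bf x})\|_2,
\]
which is valid because the hypothesis $(\xi,\delta,\mathbb{M}(\mathbb{G},8k))$-WRSC supplies the premise of Theorem~\ref{theorem-convergence:Graph-GHTP}, I obtain the geometric-plus-residual bound
\[
\|{\bf x}^i - {\bf x}\|_2 \;\le\; \alpha^i \|{\bf x}\|_2 + \frac{\beta}{1-\alpha}\,\|\nabla_I f({\bf x})\|_2,
\]
using $\sum_{j=0}^{i-1}\alpha^j \le 1/(1-\alpha)$ and the assumption $\alpha<1$. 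This is the standard trick for turning a contraction into an explicit error bound.

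\textbf{Step 2: Choose the halting index.} I stop as soon as the transient term $\alpha^i\|{\bf x}\|_2$ falls below $\|\nabla_I f({\bf x})\|_2$. Solving $\alpha^i\|{\bf x}\|_2 \le \|\nabla_I f({\bf x})\|_2$ gives
\[
i \;\ge\; \frac{\log\!\bigl(\|{\bf x}\|_2/\|\nabla_I f({\bf x})\|_2\bigr)}{\log(1/\alpha)} \;=\; O\!\left(\log\frac{\|{\bf x}\|_2}{\|\nabla_I f({\bf x})\|_2}\right),
\]
since $1/\log(1/\alpha)$ is a fixed constant under the theorem's hypothesis. Plugging this back into the bound from Step~1 yields
\[
\|\hat{\bf x} - {\bf x}\|_2 \;\le\; \Bigl(1 + \frac{\beta}{1-\alpha}\Bigr)\,\|\nabla_I f({\bf x})\|_2 \;=\; c\,\|\nabla_I f({\bf x})\|_2,
\]
establishing the approximation guarantee. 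The support claim $\text{supp}(\hat{\bf x}) \in \mathbb{M}(\mathbb{G},5k)$ is immediate: the final iterate is produced by ${\bf x}^{i+1} \leftarrow {\bf b}_{S^{i+1}}$ with $S^{i+1} = \text{T}({\bf b})$, and the tail oracle returns a support inside $\mathbb{M}(\mathbb{G},k_T)$ with $k_T = 5k$.

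\textbf{Step 3: Per-iteration cost.} Each iteration of \textsc{Graph-GHTP} performs one gradient evaluation and one head approximation (Line~5), one support identification (Line~6), one restricted minimization (Line~7) that takes time $T$ by assumption, one tail approximation (Line~8), and one $O(n)$ restriction (Line~9). Under the nearly-linear-time head/tail oracles used here (the algorithms of~\cite{hegde2015nearly}), both approximations run in $O(|\mathbb{E}|\log^3 n)$. Thus one iteration costs $O(T + |\mathbb{E}|\log^3 n)$, and multiplying by the iteration count from Step~2 produces the stated bound~(\ref{timecomplexity}). The argument for \textsc{Graph-IHT} is identical, with $T$ replaced by the $O(n)$ cost of the gradient-descent step in Line~6 of Algorithm~\ref{Graph-IHT}.

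The main subtle point is not the algebra but the halting condition: the theorem is phrased as if the algorithm terminates after a specific number of iterations, so I need to specify the halting rule precisely (the geometric-error criterion above) and confirm that it is implementable, or equivalently, that the user can supply a tolerance on $\|\nabla_I f({\bf x})\|_2$. Everything else is routine given Theorem~\ref{theorem-convergence:Graph-GHTP} and the nearly-linear cost of the head/tail oracles from~\cite{hegde2015nearly}.
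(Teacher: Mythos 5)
Your proposal is correct and follows essentially the same route as the paper's own proof: unroll the contraction from Theorem~\ref{theorem-convergence:Graph-GHTP} to get $\|{\bf x}^i - {\bf x}\|_2 \le \alpha^i\|{\bf x}\|_2 + \frac{\beta}{1-\alpha}\|\nabla_I f({\bf x})\|_2$, run for $\lceil \log(\|{\bf x}\|_2/\|\nabla_I f({\bf x})\|_2)/\log(1/\alpha)\rceil$ iterations, and multiply by the per-iteration cost $O(T + |\mathbb{E}|\log^3 n)$ of the head/tail oracles plus the restricted minimization. Your added remarks on the support claim (via $k_T = 5k$ in the tail oracle) and on making the halting rule explicit are sensible refinements the paper leaves implicit, but they do not change the argument.
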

\begin{proof}
The i-th iterate of \textsc{Graph-GHTP} (or \textsc{Graph-IHT}) satisfies
\vspace{-1mm}
\begin{small}
\begin{eqnarray}
\|{\bf x} - {\bf x}^i\|_2 \le \alpha^i \|{\bf x}\|_2 + \frac{\beta}{1-\alpha} \|\nabla_I f({\bf x})\|_2.
\end{eqnarray}
\end{small}

\vspace{-3mm}
\noindent After $t = \left \lceil \log \left(\frac{\|{\bf x}\|_2}{\|\nabla_I f({\bf x})\|_2}\right) / \log \frac{1}{\alpha} \right \rceil $ iterations,
\textsc{Graph-GHTP} (or \textsc{Graph-IHT}) returns an estimate $\hat{x}$ satisfying 
$
\|{\bf x} - \hat{{\bf x}}\|_2 \le (1 + \frac{\beta}{1 - \alpha}) \|\nabla_I f({\bf x})\|_2.  
$ The time complexities of both head approximation and tail approximation are $O(|\mathbb{E}| \log^3 n)$. The time complexity of one iteration in \textsc{Graph-GHTP} (or \textsc{Graph-IHT}) is $(T+|\mathbb{E}|\log^3 n)$, and the total number of iterations is $\left \lceil \log \left(\frac{\|{\bf x}\|_2}{\|\nabla_I f({\bf x})\|_2}\right) / \log \frac{1}{\alpha} \right \rceil $, and the overall time follows. \textsc{Graph-GHTP} and \textsc{Graph-IHT} are only different in the definition of $\alpha$ and $\beta$ in this Theorem. 
\end{proof}

As shown in Theorem~\ref{timecomplexity}, the time complexity of \textsc{Graph-GHTP} is dependent on the total number of iterations and the time cost ($T$) to solve the subproblem in Step 6. In comparison, the time complexity of \textsc{Graph-IHT}  is dependent on the total number of iterations and the time cost $T$ to calculate the gradient $\nabla f({\bf x}^i)$ in Step 5. It implies that, although \textsc{Graph-GHTP} converges faster than \textsc{Graph-IHT}, the time cost to solve the subproblem in Step 6 is often much higher than the time cost to calculate a gradient $\nabla f({\bf x}^i)$, and hence \textsc{Graph-IHT} runs faster than \textsc{Graph-GHTP} in practice. 

\begin{table*}[!ht]
\caption{The three typical graph scan statistics that are tested in our experiments (The vectors ${\bf x}$, ${\bf c}$, and ${\bf b}$ are defined in Section~\ref{sect:applications})}
\vspace{-2mm}
\centering
\begin{tabular}{ | p{2.8cm}|p{4cm}| p{9cm}|}
\hline
 \textbf{Score Functions} & \textbf{Definition} & \textbf{Applications} \\ 
\hline
 Kulldorff's Scan Statistic ~\cite{neill2009empirical}& ${\bf c}^\mathsf{T} {\bf x} \log \frac{{\bf c}^\mathsf{T} {\bf x}}{{\bf b}^\mathsf{T} {\bf x}}  -{\bf 1}^\mathsf{T} {\bf c} \log \frac{{\bf 1}^\mathsf{T} {\bf c}}{{\bf 1}^\mathsf{T} {\bf b}} + ({\bf 1}^\mathsf{T} {\bf c} - {\bf c}^\mathsf{T} {\bf x}) \log \frac{{\bf 1}^\mathsf{T} {\bf c} - {\bf c}^\mathsf{T} {\bf x}}{{\bf 1}^\mathsf{T} {\bf b} - {\bf b}^\mathsf{T} {\bf x}}$  & The statistic is used for anomalous pattern detection in graphs with count features, such as detection of traffic bottlenecks in sensor networks~\cite{anbarouglu2015non, modarres2007hotspot}, detection of anomalous regions in digitals and images~\cite{coulston2003geographic}, detection of attacks in computer networks~\cite{neil2013scan}, disease outbreak detection~\cite{Speakman-14}, and various others.\\ 
 \hline
 Expectation-based Poisson Statistic (EBP) ~\cite{neill2012fast}& ${\bf c}^\mathsf{T} {\bf x} \log \frac{{\bf c}^\mathsf{T} {\bf x}}{{\bf b}^\mathsf{T} {\bf x}} + {\bf b}^\mathsf{T} {\bf x} - {\bf c}^\mathsf{T} {\bf x}$ & This statistic is used for the same applications as above~\cite{gorr2015early, anbarouglu2015non, neill2009empirical, Speakman-14, speakman2013dynamic}, but has different assumptions on data distribution~\cite{neill2009empirical}.\\ 
 \hline 
 Elevated Mean Scan Statsitic (EMS) ~\cite{qian2014connected}& $ {\bf c}^\mathsf{T} {\bf x} / {\bf 1}^\mathsf{T} {\bf x}$ & This statistic is used for anomalous pattern detection in graphs with numerical features, such as event detection in social networks, network surveillance, disease outbreak
detection, biomedical imaging~\cite{qian2014connected, sharpnack2013near} \\ 
 \hline
\end{tabular}
\label{table:scoreFunctions}
\vspace{-3mm}
\end{table*}

\section{Applications on Graph Scan Statistics}
\label{sect:applications}
In this section, we specialize \textsc{Graph}-\textsc{IHT} and \textsc{Graph}-\textsc{GHTP} to optimize a number of well-known graph scan statistics for the task of connected subgraph detection, including elevated mean scan (EMS) statistic~\cite{qian2014connected}, Kulldorff's scan statistic~\cite{neill2009empirical}, and expectation-based Poisson (EBP) scan statistic~\cite{neill2012fast}. Each graph scan statistic is defined as the generalized likelihood ratio test (GLRT) statistic of a \textit{specific} hypothesis testing about the distributions of features of normal and abnormal nodes. The EMS statistic corresponds to the following GLRT test: Given a graph $\mathbb{G} = (\mathbb{V}, \mathbb{E})$, where $\mathbb{V} = [n]$ and $\mathbb{E} \subseteq \mathbb{V}\times \mathbb{V}$, each node $i$ is associated with a random variable $x_i$:
\begin{eqnarray}
x_i = \mu \cdot 1(i\in S) + \epsilon_i,\  i \in \mathbb{V},
\end{eqnarray} 
where $|\mu|$ represents the signal strength and $\epsilon_i \in \mathcal{N}(0, 1)$.  $S$ is some unknown anomalous cluster that forms as a connected subgraph. The task is to decide between the null hypothesis ($H_0$): $c_i \in \mathcal{N}(0, 1), \forall i \in \mathbb{V}$ and the alternative ($H_1(S)$): $c_i \in \mathcal{N}(\mu, 1), \forall i \in S$ and $c_i \in \mathcal{N}(0, 1), \forall i \notin S$. The EMS statistic is defined as the GLRT function under this hypothesis testing: 
\begin{eqnarray}
F(S) = \frac{\text{Prob}(\text{Data} | H_1(S))}{\text{Prob}(\text{Data} | H_0)} = \frac{1}{\sqrt{|S|}} \sum_{i \in S} c_i. 
\end{eqnarray}
The problem of connected subgraph detection based on the EMS statistic is then formulated as 
\begin{eqnarray}
\min_{S \subseteq \mathbb{V}} -\frac{1}{|S|} (\sum_{i \in S} c_i)^2\ \ s.t.\ \ S \in \mathbb{M}(\mathbb{G}, k), \label{EMS-detection}
\end{eqnarray}
where the \textbf{square} of the EMS scan statistic is considered to make the function smooth, and this transformation does not infect the optimum solution.  Let the $\{0,1\}$-vectors form of $S$ be ${\bf x}\in \{0, 1\}^n$, such that $\text{supp}({\bf x}) = S$. Problem~(\ref{EMS-detection}) can be reformulated as 
\begin{eqnarray}
\min_{x \in \{0, 1\}^n} - ({\bf c}^\mathsf{T} {\bf x})^2/ ({\bf 1}^\mathsf{T} {\bf x}) \ \ s.t.\ \text{supp}({\bf x}) \in \mathbb{M}(\mathbb{G}, k),
\end{eqnarray}
where ${\bf c} = [c_1, \cdots, c_n]^\mathsf{T}$. To apply our proposed algorithms, we relax the input domain of ${\bf x}$ and maximize the strongly convex function~\cite{bach2011learning}: 
\begin{eqnarray}
\min_{x \in \mathbb{R}^n} - ({\bf c}^\mathsf{T} {\bf x})^2/ ({\bf 1}^\mathsf{T} {\bf x}) + \frac{1}{2}{\bf x}^\mathsf{T} {\bf x}\ \ s.t.\ \text{supp}({\bf x}) \in \mathbb{M}(\mathbb{G}, k).\label{prob:stronglyconvex}
\end{eqnarray}
The connected subset of nodes can be found as the subset of indexes of positive entries in $\hat{\bf x}$, where $\hat{\bf x}$ refers to the solution of the Problem~(\ref{prob:stronglyconvex}). Assume that ${\bf c}$ is normalized and $c_i \le 1,$ $\forall i$. Let $\hat{c} = \max \{c_1, \cdots, c_n\}$. The Hessian matrix of the above objective function satisfies the following conditions 
\begin{eqnarray}
 (1 - \hat{c}^2) \cdot  \textbf{I}\preceq \textbf{I} - ({\bf c} - \frac{{\bf c}^\mathsf{T} {\bf x}}{{\bf 1}^\mathsf{T} {\bf x}} {\bf 1} ) ({\bf c} - \frac{{\bf c}^\mathsf{T} {\bf x}}{{\bf 1}^\mathsf{T} {\bf x}} {\bf 1})^\mathsf{T} \preceq 1\cdot \textbf{I}. 
\end{eqnarray}
According to Lemma 1 (b) in \cite{yuan2013gradient}),  the objective function $f({\bf x})$ satisfies condition $(\xi, \delta, \mathbb{M}(\mathbb{G}, 8k))$-WRSC that 
\[
\delta = \sqrt{1 - 2 \xi (1 - \hat{c}^2) + \xi^2},
\]
for any $\xi$ such that $\xi < 2 (1 - \hat{c}^2)$. The geometric convergence of \textsc{Graph-GTHP} as shown in Theorem~\ref{theorem-convergence:Graph-GHTP} is guaranteed.

Different from the EMS statistic that is defined for numerical features based on Gaussian distribution, the Kulldorff's scan statistic and Expectation Based Poisson statistic (EBP) are defined for count features based on Poisson distribution. In particular, each node $i$ is associated with a feature $c_i$, the count of events (e.g., crimes, flu infections) observed at the current time, and a feature $b_i$, the expected count (or `baseline') of events by using historical data. Let ${\bf c} = [c_1, \cdots, c_n]^\mathsf{T}$ and ${\bf b} = [b_1, \cdots, b_n]^\mathsf{T}$. The Kulldorff's scan statistic and EBP scan statistics are described Table~\ref{table:scoreFunctions}. We note that these two scan statistics do not satisfy the \textsc{WRSC} condition, but as demonstrated in our experiments, our proposed algorithms perform empirically well for all the three scan statistics, and in particular, our proposed \textsc{Graph-GHTP} converged in less than 10 iterations in all the settings.  

\begin{table*}[ht]
\centering \caption{Summary of dataset settings in the experiments. (For the network of each dataset, we only use its maximal connected component if it is not fully connected).} 
\vspace{-2mm}
\label{table:datasets}
\begin{threeparttable}
\resizebox{1\textwidth}{!}{
\begin{tabular}{|p{1.5cm}|p{2.6cm}|p{4.5cm}|p{2.4cm}|r|r|r|r|}
\hline
\textbf{Dataset}  & \textbf{Application} & \textbf{Training \& Testing Time Periods} &\textbf{Observed value at node }$v$: $c_v$ & \textbf{Baseline value at node }$v$: $b_v$ & \textbf{\# of Nodes} & \textbf{\# of Edges} & \textbf{\# of snapshots}  \\
\hline
BWSN & Detection of  & Training: Hours 3 to 5 with $0\%$ noise & Sensor value (0 or 1) & Average sensor value for EBP& 12,527 & 14,323 & hourly: $3\times 6$  \\
 & contaminated nodes & Testing: Hours 3 to 5 with $2\%,4\%, \cdots, 10\%$ noises &  & Constant `1' for Kulldorff & & &\\
\hline
CitHepPh & Detection of emerging & Testing: 1999 to 2002 &  Count of citations & Average count of citations for EBP & 11,895 & 75,873 & yearly: $1\times 11$ \\
&  research areas&  &  & Maximum count of citations for Kulldorff & & & \\
\hline
Traffic & Detection of most &Testing: Mar.2014, 5AM to 10PM &  $-\log(p^t(v)/\mu)\ $\footnotemark & None  & 1,723 & 5,301 &per 15 min: $68\times 304$  \\
& congested subgraphs  &  &  & (EBP and Kulldorff are not applicable) & & &  \\
\hline
ChicagoCrime & Detection of crime & Testing: Year of 2015 & Count of burglaries & Average count of burglaries for EBP & 46,357 & 168,020 & yearly: $1\times 15$ \\
& hot spots& & & Maximum count of burglaries for Kulldorff & & &  \\
\hline
\end{tabular}}
\begin{tablenotes}
\item[1] $p^t(v)$ refers to the statistical p-value of node $v$ at time $t$ that is calculated via empirical calibration based on historical speed values of $v$ from 2013 June. \\ 1st  to 2014 Feb. 29; and $\mu$ is a significance level threshold and is set to $0.15$. The larger this value $-\log(p^t(v)/\mu)$, the more congested in the region near $v$. 
\end{tablenotes}
\end{threeparttable}
\vspace{-3mm}
\end{table*}

\section{experiments}
\label{sect:experiments}
This section evaluates the performance of our proposed methods using four public benchmark data sets for  connected subgraph detection. The \textbf{experimental code and data sets} are available from the Link \cite{fullversion} for reproducibility. 
\subsection{Experiment Design}
\label{sect:experiment-design}
{\bf Datasets:} 1) \textbf{BWSN Dataset.} A real-world water network is offered in the Battle of the Water Sensor Networks (BWSN)~\cite{ostfeld2008battle}. That has 12,527 nodes and 14,323 edges. In order to simulate a contaminant sub-area, 4 nodes with chemical contaminant plumes, which were distributed in this sub-area, were generated. We use the water network simulator EPANET~\cite{rossman2000epanet} that was employed in BWSN for a period of 3 hours to simulate the spreads for contaminant plumes on this graph. If a node is polluted by the chemical, then its sensor reports 1, otherwise, 0, in each hour. To test the tolerance of noise of our methods, $K\in \{2,4,6,8,10\}$ percent vertices were selected randomly, and their sensor reports were set to 0 if their original reports were 1 and vice versa. Each hour has a graph snapshot. The snapshots corresponding to the 3 hours that have 0\% noise are considered for training, and the snapshots that have $2\%, \cdots, 10\%$ noise reports for testing. The goal is to detect a connected subgraph that is corresponding to the contaminant sub-area. 2) \textbf{CitHepPh Dataset.} We downloaded the high energy physics phenomenology citation data (CitHepPh) from Stanford Network Analysis Project  (SNAP) \cite{leskovec2005graphs}. This citation graph contains 11,897 papers corresponding to graph vertices and 75,873 edges. An undirected edge between two vertices (papers) exists , if one paper is cited by another. The period of these papers published is from January 1992 to April 2002. Each vertex has two attributes for each specific year ($t=1992,\cdots ,t = 2002$). We denote the number of citations of each specific year as the first attribute and the average citations of all papers in that year as the second attribute. The goal is to detect a connected subgraph where the number of citations of vertices (papers) in this subgraph are abnormally high compared with the citations of vertices that are not in this subgraph. This connected subgraph is considered as a potential emerging research area. Since the training data is required for some baseline methods, the data before 1999 is considered as the training data, and the rest from 1999 to 2002 as the testing data.
    3) \textbf{Traffic Dataset}. Road traffic speed
data from June 1, 2013 to Mar. 31, 2014 in the arterial road network of the Washington D.C. region is collected from the INRIX database (http://inrix.com/publicsector.asp), with 1,723  nodes and 5,301 edges. The database
provides traffic speed for each link at a 15-minute rate. For each 15-minute interval, each method identities a connected subgraph as the most congested region. 4) \textbf{ChicagoCrime Dataset.} We collected crime data from City of Chicago [https://data.cityofchicago.org/Public-Safety/Crimes-2001-to-present/ijzp-q8t2] from Jan. 2001 to Dec. 2015. There are 46,357 nodes (census blocks) and 168,020 edges (Two census blocks are connected with each other if they are neighbours). Specifically, we collected all records of burglaries from 2001 to 2015. The data covers burglaries in the period from Jan. 2001 to Dec. 2015. Each vertex has an attribute denoting the number of burglaries in sepcific year and average number of burglaries over 10 years. We aim to detect connected census areas which has anomaly high burglaries accidents. The data before 2010 is considered as training data, and the data from 2011 to 2015 is considered as testing data.

{\bf Graph Scan Statistics: } As shown in Table~\ref{table:scoreFunctions}, three graph scan statistics were considered as the scoring functions of connected subgraphs, including Kulldorff's scan statistic~\cite{neill2009empirical}, expectation-based Poisson (EBP) scan statistic~\cite{neill2012fast}, and elevated mean scan (EMS) statistic~\cite{qian2014connected}. The first two statistic functions require that each vertex $v$ has a count $c_v$ representing the count of events observed at that vertex, and an expected count (`baseline`) $b_v$. For EMS statistic, only $c_v$ is used. We need to normalize $c_v$ for EMS as it is defined based on the assumptions of standard normal distribution for normal values and shifted-mean normal distribution for abnormal values. Table~\ref{table:datasets} provides details about the calculations of $c_v$ and $b_v$ for each data set.

\begin{figure*}[!htb]
\begin{minipage}{0.35\textwidth}
\centering
  \includegraphics[width=0.80\textwidth, height=0.91\textwidth,natwidth=610,natheight=642]{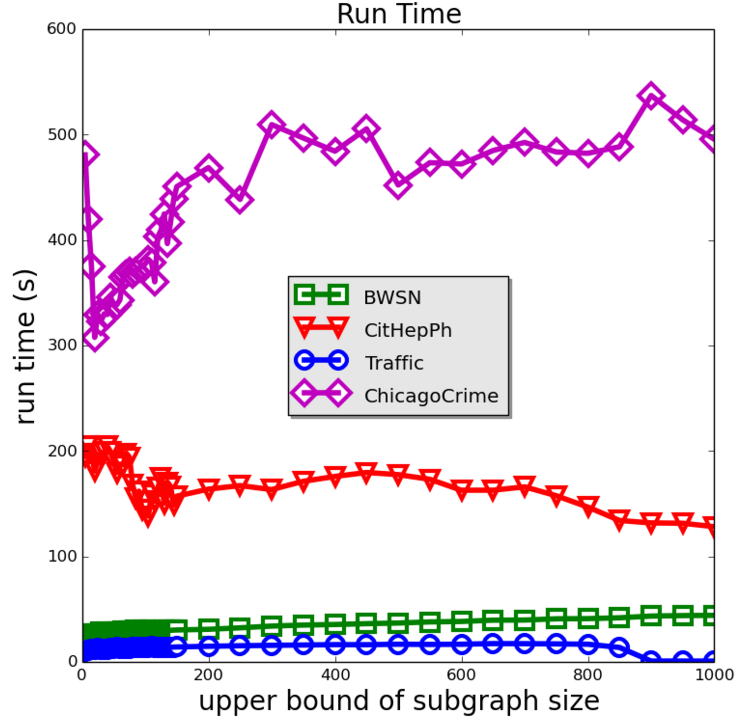}
  \caption{Scability of \textsc{Graph-GHTP} with respect to  $k$ (the upper bound of subgraph size).}
  \label{fig:comparsion-of-iterations1}
\end{minipage}\hfill
\begin{minipage}{0.63\textwidth}
\centering
   \includegraphics[width=0.95\textwidth, height=0.51\textwidth,natwidth=610,natheight=642]{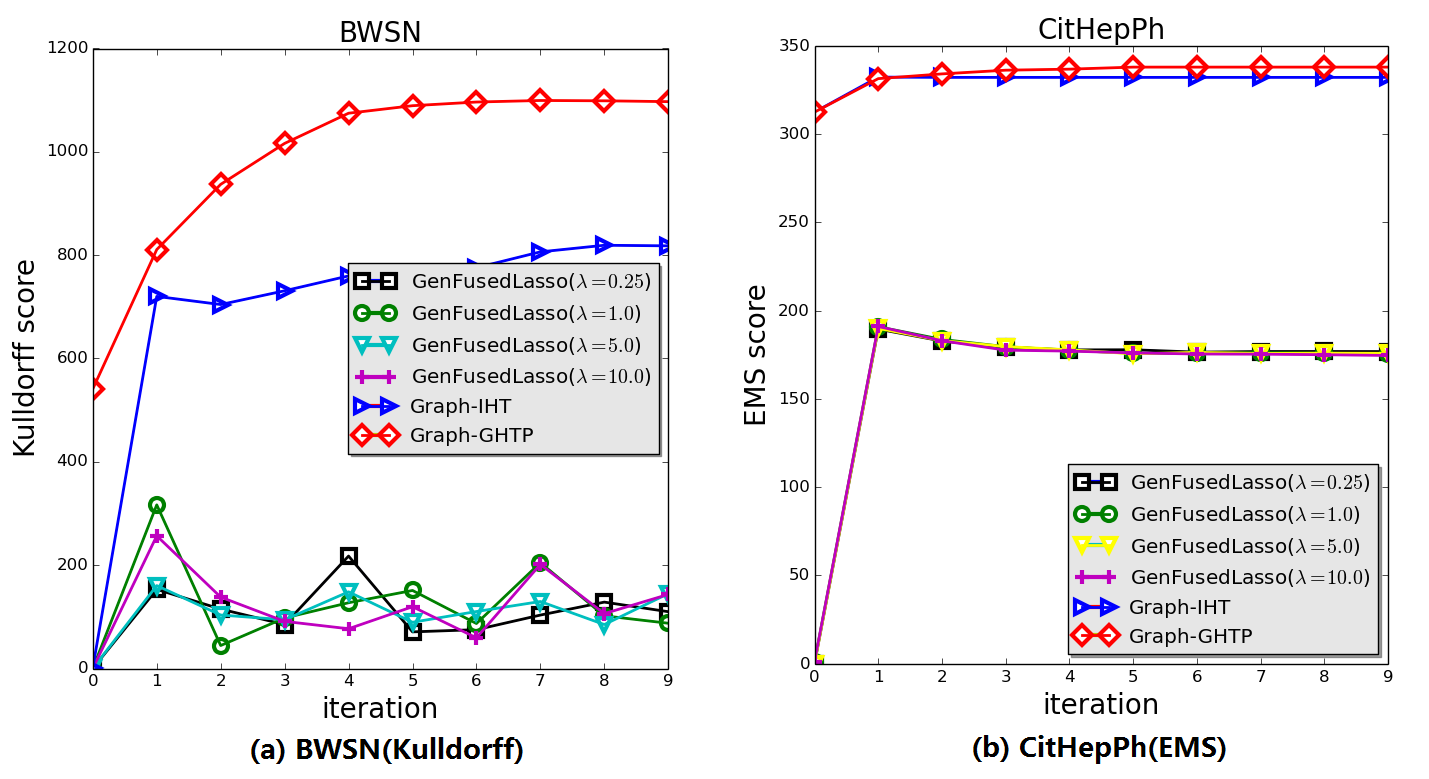}
  \caption{Evolving curves of graph scan statistic scores between our methods( \textsc{Graph-IHT} and \textsc{Graph-GHTP}) and \texttt{GenFusedLasso} in different iterations.}
  \label{fig:comparsion-of-iterations2}
\end{minipage}
\vspace{-1mm}
\end{figure*}

\begin{figure*}[t]
  \centering
  \subfigure[BWSN(precision)]{
    \includegraphics[width=0.31\textwidth, height=0.26\textwidth,natwidth=610,natheight=642]{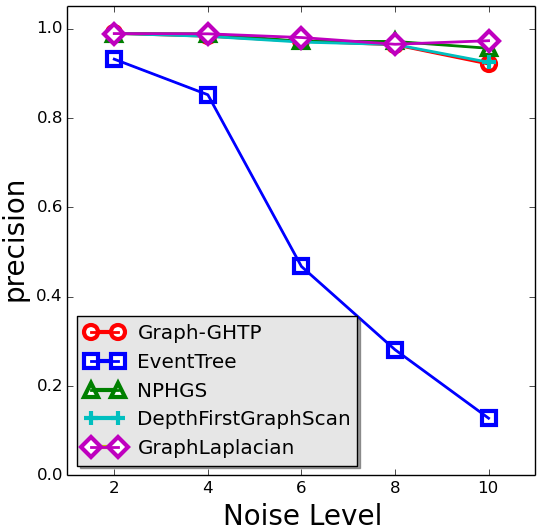}}
  \subfigure[BWSN(recall)]{
    \includegraphics[width=0.31\textwidth, height=0.26\textwidth,natwidth=610,natheight=642]{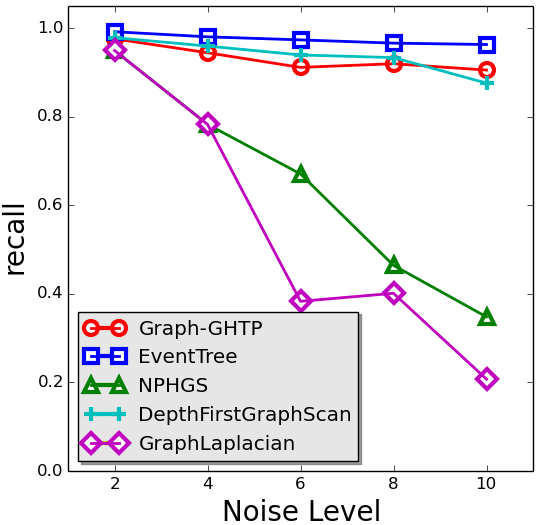}}
    \subfigure[BWSN(fmeasure)]{
    \includegraphics[width=0.31\textwidth, height=0.26\textwidth,natwidth=610,natheight=642]{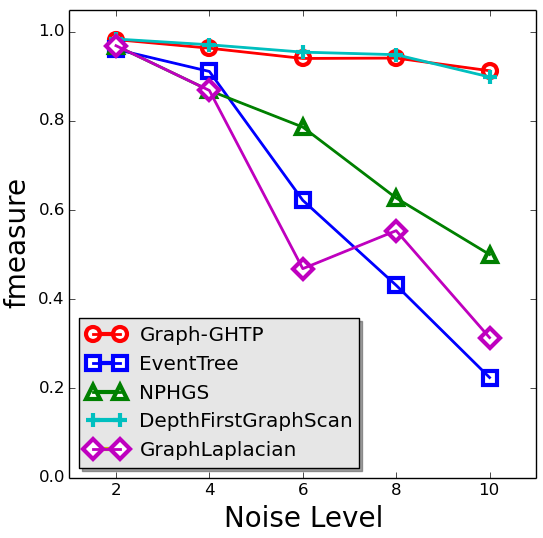}}
  \caption{Precision, Recall, and F-measure curves for water pollution detection in BWSN with respect to different noise ratios.}
  \label{fig:comparsion-of-pre-rec}
  
\end{figure*}

\begin{table*}[t]
\caption{Comparison on scores of the three graph scan statistics based on connected subgraphs returned by comparison methods. EMS and EBP are Elevated Mean Scan Statistic and Expectation-Based Poisson Statistic, respectively. }
\vspace{-2mm}
\small
\footnotesize
\centering

\begin{tabular}{|c | c | c | c | c | c |c | c | c |}
\hline
 &\multicolumn{4}{c|}{BWSN}&\multicolumn{4}{c|}{CitHepPh}\\
\cline{2-9}
  & Kulldorff & EMS & EBP & Run Time & Kulldorff & EMS & EBP & Run Time\\
\hline
\hline 
\textsc{Graph-GHTP}   &\textbf{1097.15} & \textbf{21.56} & \textbf{79.71} & \textbf{165.86} & 16296.40 & \textbf{337.90} & \textbf{9342.94} & 155.74\\

\texttt{GraphLaplacian} & 474.96 & 14.89 & 49.91 & 55315.94 & 2585.44 & 202.38 & 2305.05 & 22424.24\\

\texttt{EventTree} & 834.59 & 20.25 & 32.13 & 441.74 & \textbf{16738.43} & 335.34 & 9061.56 & \textbf{124.28}\\

\texttt{DepthFirstGraphScan} & 735.85 & 20.41 & 79.30 & 5929.00 & 9531.19 & 260.06 & 5561.66 & 12183.88\\

\texttt{NPHGS} & 541.13 & 16.90 & 58.59 & 256.91 & 11965.14 & 326.23 & 9098.22 & 175.08\\

\hline
\end{tabular}
\label{table:comparison1}
\vspace{-3mm}
\end{table*}

\begin{table*}[t]
\caption{Comparison on scores of the three graph scan statistics based on connected subgraphs returned by comparison methods. EMS and EBP are Elevated Mean Scan Statistic and Expectation-Based Poisson Statistic, respectively. \texttt{GraphLaplacian} failed to run on ChicagoCrime due to out-of-memory error.}
\vspace{-2mm}
\small
\footnotesize
\centering

\begin{tabular}{|c | c | c | c |c | c | c |}
\hline
 &\multicolumn{2}{c|}{Traffic}&\multicolumn{4}{c|}{ChicagoCrime}\\
\cline{2-7}
  & EMS & Run Time & Kulldorff & EMS & EBP & Run Time\\
\hline
\hline
\textsc{Graph-GHTP}   & \textbf{20.45} & 22.25 & \textbf{6386.08} & \textbf{5.45} & \textbf{5172.54} & 3177.60\\

\texttt{GraphLaplacian} & 5.40 & 291.75 & - & - & - & -\\

\texttt{EventTree} & 12.40 & 5.02 & 4388.42 & 4.91 & 3965.96 & \textbf{226.50}\\

\texttt{DepthFirstGraphScan} & 8.13 & 47.73 & 1123.49 & 2.56 & 1094.21 & 12133.50\\

\texttt{NPHGS} & 6.28 & \textbf{0.22} & 966.70 & 2.43 & 948.23 & 701.40\\
\hline
\end{tabular}
\label{table:comparison2}
\vspace{-3mm}
\end{table*}

{\bf Comparison Methods:} We compared our proposed methods with four state-of-the-art baseline methods that are designed specifically for connected subgraph detection, namely, \texttt{GraphLaplacian}  ~\cite{sharpnack2012changepoint}, \texttt{EventTree} ~\cite{rozenshtein2014event}, \texttt{DepthFirstGraphScan} ~\cite{Speakman-14} and \texttt{NPHGS} ~\cite{DBLP:conf/kdd/ChenN14}.  \texttt{DepthFirstGraphScan} is an exact search algorithm based on depth-first search and takes weeks to run on graphs that have more than 1000 nodes. We imposed a maximum limit on the depth of the search to 10 to reduce its time complexity. 

\vspace{-1mm}
The basic ideas of these baseline methods are summarized as follows: \texttt{NPHGS} starts from random seeds (nodes) as initial candidate clusters and gradually expends each candidate cluster by including its neighboring nodes that could help improve its BJ statistic score until no new nodes can be added. The candidate cluster with the largest BJ statistic score is returned. \texttt{DepthFirstGraphScan} adopts a similar strategy to \texttt{NPHGS} but expands the initial clusters based on depth-first search. \texttt{GraphLaplacian} uses a graph Laplacian penalty function to replace the connectivity constraint and converts the problem to a convex optimization problem. \texttt{EventTree} reformulates the connected subgraph detection problem as a prize-collecting steiner tree (PCST) problem~\cite{johnson2000prize} and apply the Goemans-Williamson (G-W) algorithm for PCST~\cite{johnson2000prize} to detect anomalous subgraphs. We also implemented the generalized fused lasso model (\texttt{GenFusedLasso}) for graph scan statistics  using the framework of alternating direction method of multipliers (ADMM). \texttt{GenFusedLasso} method solves the following minimization problem
\begin{eqnarray}
\min_{{\bf x} \in \mathbb{R}^n} -f({\bf x}) + \lambda \sum\nolimits_{(i, j) \in \mathbb{E}} |x_i - x_j|,
\end{eqnarray}
where $f({\bf x})$ is a predefined graph scan statistic and the trade-off parameter $\lambda$ controls the degree of smoothness of neighboring entries in ${\bf x}$. We applied the heuristic rounding step proposed in~\cite{qian2014connected} to ${\bf x}$ to generate connected subgraphs. 

{\bf Parameter Tunning: } We
strictly followed strategies recommended by authors in their original
papers to tune the related model parameters.
Specifically, for \texttt{EventTree}, we tested the set of $\lambda$ values: $\{0.02, 0.04, \cdots, 2.0, 3.0, \cdots, 20\}$. For \texttt{Graph-Laplacian}, we tested the set of $\lambda$ values: $\{0.001, 0.003, 0.01, 0.03, 0.1, 0.3, 1\}$ and returned the best result. For \texttt{GenFusedLasso}, we tested the set of $\lambda$ values: $\{0.02, 0.04,$ $\cdots, 2.0, 3.0, \cdots, 20\}$. For \texttt{NPHGS}, we set the suggested parameters by the authors: $\alpha_{max} = 0.15$ and $K=5$. Our proposed methods \textsc{Graph-IHT} and \textsc{Graph-GHTP} have a single parameter $k$, an upper bound of the subgraph size. We tested the set of $k$ values: $\{50, 100, \cdots, 1000\}$. As the BWSN dataset has the ground truth about the contaminated nodes, we identified the best parameter for each method that has the largest F-measure in the training data. For the other data sets, as we do not have ground truth about the true subgraphs, for each specific scan statistic, we identified the best parameter for each method that was able to identify the connected subgraph with the largest statistic score. 

{\bf Performance Metrics: } \textbf{1) Optimization Power.} The overall scores of the three graph scan statistic functions of the connected subgraphs returned by the comparison methods are compared and analyzed. The objective is to identify methods that could find the connected subgraphs with the largest graph scan statistic scores. \textbf{2) Precision, Recall, and F-Measure.} For the BWSN dataset, as the true anomalous subgraphs are known,we use F-measure that combines precision and recall to evaluate the quality of detected subgraphs by different methods. \textbf{3) Run Time.} The running times of different methods are compared. 

\subsection{Evolving Curves of Graph Scan Statistics}

Figure~\ref{fig:comparsion-of-iterations2} compares our methods \texttt{Graph-IHT} and \textsc{Graph-GHTP} with \texttt{GenFusedLasso} on the scores of two graph scan statistics (Kulldorff's scan statistic and elevated mean scan statistic (EMS)) based on the best connected subgraphs identified by both methods in different iterations. Note that, a heuristic rounding process as proposed in~\cite{qian2014connected} is applied to continuous vector ${\bf x}^i$ estimated by \texttt{GenFusedLasso} at each iteration $i$ in order to identify the best connected subgraph at the current iteration. As the setting of the parameter $\lambda$ will influence the quality of the detected connected subgraph, the results under different $\lambda$ values are also shown in Figure~2. The results indicate that our method \textsc{Graph-GHTP} converges in less than 10 iterations, and \textsc{Graph-IHT} converges in more steps. The qualities (scan statistic scores) of the connected subgraphs identified at different iterations by our two methods are consistently higher than those returned by \texttt{GenFusedLasso}. 

\subsection{Optimization Power}
The comparisons between our method and the other baseline methods are shown in Table~\ref{table:comparison1} and Table~\ref{table:comparison2}.
The scores of the three graph scan statistics based on the connected subgraphs returned by these methods are reported in these two tables. The results in indicate that our method outperformed all the baseline methods on the three graph scan statistics, except that \texttt{EventTree} achieved the highest Kulldorff score (16738.43) on the CitHepPh dataset, but is only $2.71\%$ larger than the returned score of our method \textsc{Graph-GHTP}. We note \texttt{EventTree} is a heuristic algorithm and does not provide theoretical guarantee on the quality of the connected subgraph returned, as measured by the scan statistic scores. 

\subsection{Water Pollution Detection}
Figure~\ref{fig:comparsion-of-pre-rec} shows the precision, recall, and F-measure of all the comparison methods on the detection of polluted nodes in the water distribution network in BWSN with respect to different noise ratios. The results indicate that our proposed method \textsc{Graph-GHTP} and \texttt{DepthFirstGraphScan} were the best methods on all the three measures for most of the settings. However, \texttt{DepthFirstGraphScan} spent $5929$ seconds to finish, and \textsc{Graph-GHTP} spent only $166$ seconds, 35.8 times faster than \texttt{DepthFirstGraphScan}. 
\texttt{EventTree} achieved high recalls but low precisions consistently in different settings. In contrast, \texttt{GraphLaplacian} and \texttt{NPHGS} achieved high precisions but low recalls in most settings. 

\subsection{Scalability Analysis}
Table~\ref{table:comparison1} and Table~\ref{table:comparison2} also show the comparison between our proposed method \textsc{Graph-GHTP} and other baseline methods on the running time. The results indicate that our proposed method \textsc{Graph-GHTP} ran faster than all the baseline methods in most of the settings, except for \texttt{EventTree}. \texttt{EventTree} was the fastest method but was unable to detect subgraphs with high qualities. As our method has a parameter on the upper bound ($k$) of the subgraph returned, we also conducted the scalability of our method with respect to different values of $k$ as shown in Figure \ref{fig:comparsion-of-iterations1}. The results indicate that the running time of our algorithm is insensitive to the setting of $k$, which is consistent with the time complexity analysis of \textsc{Graph-GHTP} as discussed in Theorem~\ref{Theorem:runningTime}.

\section{Related Work}
\label{sect:relatedWork}
\noindent \textbf{A. Structured sparse optimization.} The methods in this category have been briefly reviewed in the introduction section. The most relevant work is by Hegde et al.~\cite{hegde2015nearly}. The authors present \textsc{Graph}-\textsc{Cosamp}, a variant of \textsc{Cosamp}~\cite{needell2009cosamp}, for compressive sensing and linear regression problems based on head and tail approximations of $\mathbb{M}(\mathbb{G}, k)$. 

\vspace{1mm}
\noindent \textbf{B. Connected subgraph detection.} Existing methods in this category fall into three major categories:1)  \underline{Exact algorithms.} The most recent method is a brunch-and-bounding algorithm \texttt{DepthFirstGraphScan}~\cite{Speakman-14} that runs in exponential time in the worst case; 2)\underline{ Heuristic algorithms.} The most recent methods in this category include \texttt{EventTree}~\cite{rozenshtein2014event}, \texttt{NPHGS} ~\cite{DBLP:conf/kdd/ChenN14}, \texttt{AdditiveScan} ~\cite{speakman2013dynamic}, \texttt{GraphLaplacian} ~\cite{sharpnack2012changepoint}, and \texttt{EdgeLasso} ~\cite{sharpnack2012sparsistency}; 3) \underline{Approximation algorithms that provide performance bounds.} The most recent method is presented by Qian et al.. The authors reformulate the connectivity constraint as linear matrix inequalities (LMI) and present a semi-definite programming algorithm based on convex relaxation of the LMI [18, 19] with a performance bound. However, this method is not scalable to large graphs ($\ge$ 1000 nodes). Most of the above methods are considered as baseline methods in our experiments and are briefly summarized in Section~\ref{sect:experiment-design}.

\section{conclusion}
\label{sect:conclusion}
This paper presents, \textsc{Graph-IHT} and \textsc{Graph-GHTP}, two efficient algorithms to optimize a general nonlinear optimization problem subject to connectivity constraint on the support of variables. Extensive experiments demonstrate the effectiveness and efficency of our algorithms. For the future work, we plan to explore graph-structured constraints other than connectivity constraint and extend our proposed methods such that good theoretical properties of the cost functions that do not satisfy the WRSC condition can also be analyzed.



%
\bibliographystyle{abbrv}
\begin{small}
\bibliography{icdm2016.bib}  
\end{small}

\end{document}